  \newcommand\figcaption{\def\@captype{figure}\caption}
  \newcommand\tabcaption{\def\@captype{table}\caption}
\newtheorem{proposition}{Proposition}
\newtheorem{definition}{Definition}
\newcommand{\argmax}{\mathop{\mathrm{argmax}}}
\newcommand{\argmin}{\mathop{\mathrm{argmin}}}
\newcommand{\E}{\mathbb E}
\newcommand{\Acal}{\mathcal A}
\newcommand{\Dcal}{\mathcal D}
\newcommand{\Mcal}{\mathcal M}
\newcommand{\Ncal}{\mathcal N}
\newcommand{\Scal}{\mathcal S}
\def\algo{SCAS\xspace}
\title{Offline Reinforcement Learning with OOD State Correction and OOD Action Suppression}
\author{
  Yixiu Mao$^{1}$, Qi Wang$^{1}$, Chen Chen$^{1}$, Yun Qu$^{1}$, Xiangyang Ji$^{1}$\\
  $^{1}$Department of Automation, Tsinghua University \\
  \texttt{myx21@mails.tsinghua.edu.cn,} \texttt{xyji@tsinghua.edu.cn}
}
\begin{document}

\maketitle

\begin{abstract}
In offline reinforcement learning (RL), addressing the out-of-distribution (OOD) action issue has been a focus, but we argue that there exists an OOD state issue that also impairs performance yet has been underexplored. Such an issue describes the scenario when the agent encounters states out of the offline dataset during the test phase, leading to uncontrolled behavior and performance degradation. To this end, we propose SCAS, a simple yet effective approach that unifies OOD state correction and OOD action suppression in offline RL. Technically, SCAS achieves value-aware OOD state correction, capable of correcting the agent from OOD states to high-value in-distribution states. Theoretical and empirical results show that SCAS also exhibits the effect of suppressing OOD actions. On standard offline RL benchmarks, SCAS achieves excellent performance without additional hyperparameter tuning. Moreover, benefiting from its OOD state correction feature, SCAS demonstrates enhanced robustness against environmental perturbations.
\end{abstract}
% TLDR: This work systematically analyzes the underexplored OOD state issue in offline RL and proposes a simple yet effective approach unifying (value-aware) OOD state correction and OOD action suppression.

\section{Introduction}

Deep reinforcement learning~(RL) shows promise in solving sequential decision-making problems, gaining increasing interest for real-world applications~\citep{mnih2015human,silver2017mastering,vinyals2019grandmaster,schrittwieser2020mastering,degrave2022magnetic}. However, deploying RL algorithms in extensive scenarios poses persistent challenges, such as risk-sensitive exploration~\cite{garcia2015comprehensive} and time-consuming episode collection~\cite{kober2013reinforcement}. Recent advances view offline RL as a hopeful solution to these challenges~\citep{levine2020offline}. Offline RL aims to learn a policy from a fixed dataset without further interactions~\citep{lange2012batch}. It can tap into existing large-scale datasets for safe and efficient learning~\citep{johnson2016mimic,maddern20171,qu2024hokoff}.

In offline RL research, a well-known concern is the out-of-distribution (OOD) action issue: the evaluation of OOD actions causes extrapolation error~\cite{fujimoto2019off}, which can be exacerbated by bootstrapping and result in severe value overestimation~\cite{levine2020offline}.
To address this issue, a large body of work has emerged to directly or indirectly \textit{suppress OOD actions} during training, employing various techniques such as policy constraint~\citep{fujimoto2019off,kumar2019stabilizing,fujimoto2021minimalist}, value penalization~\citep{kumar2020conservative,an2021uncertainty,cheng2022adversarially}, and in-sample learning~\citep{kostrikov2022offline,garg2023extreme,xu2023offline}.

\begin{figure}
    \centering
    \subfigure[CQL]{
    \label{fig:tsne_4fig_cql}
    \includegraphics[width=0.228\textwidth]{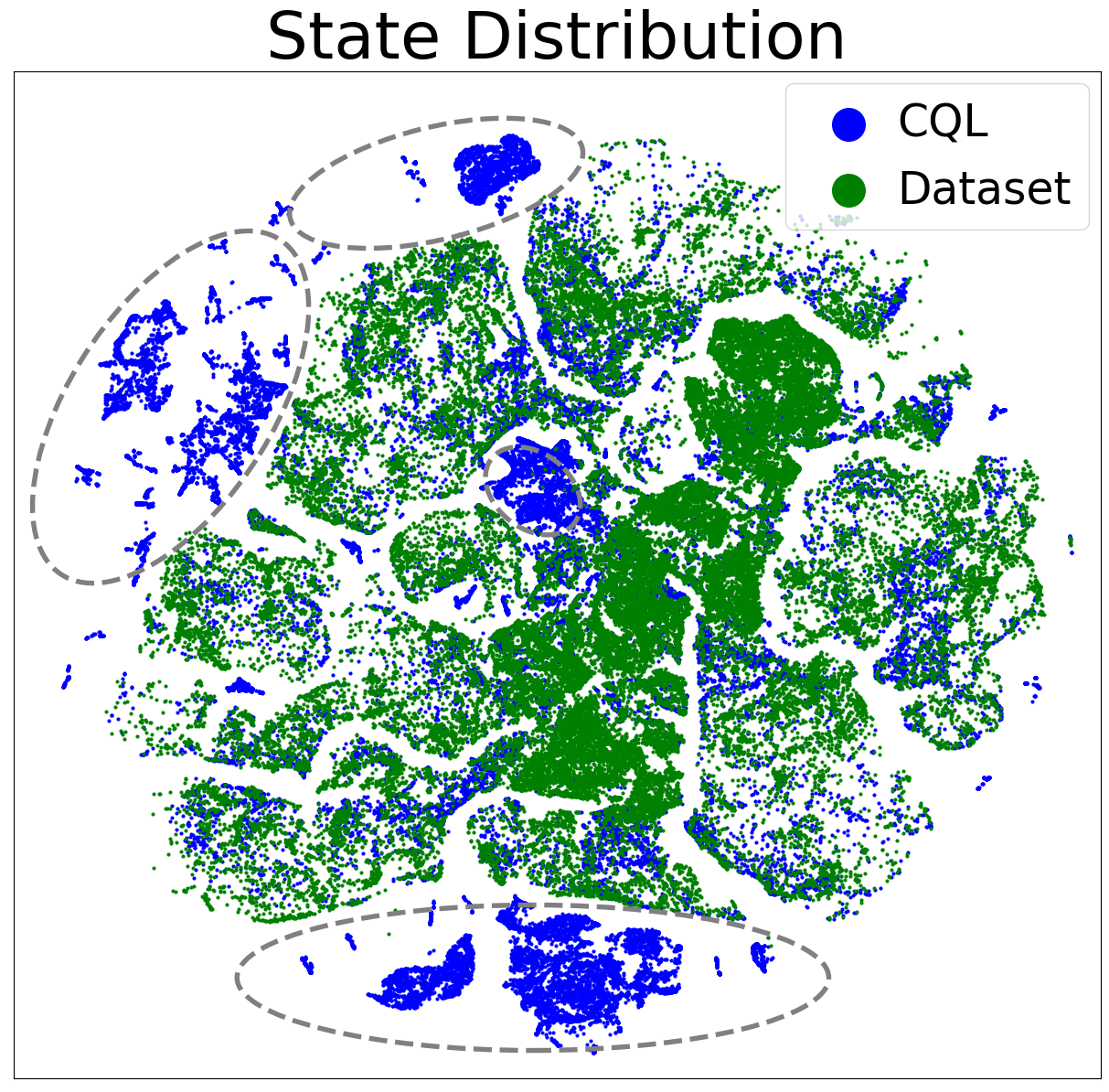}
    }\hspace{-2mm}
    \subfigure[TD3BC]{
    \label{fig:tsne_4fig_td3bc}
    \includegraphics[width=0.228\textwidth]{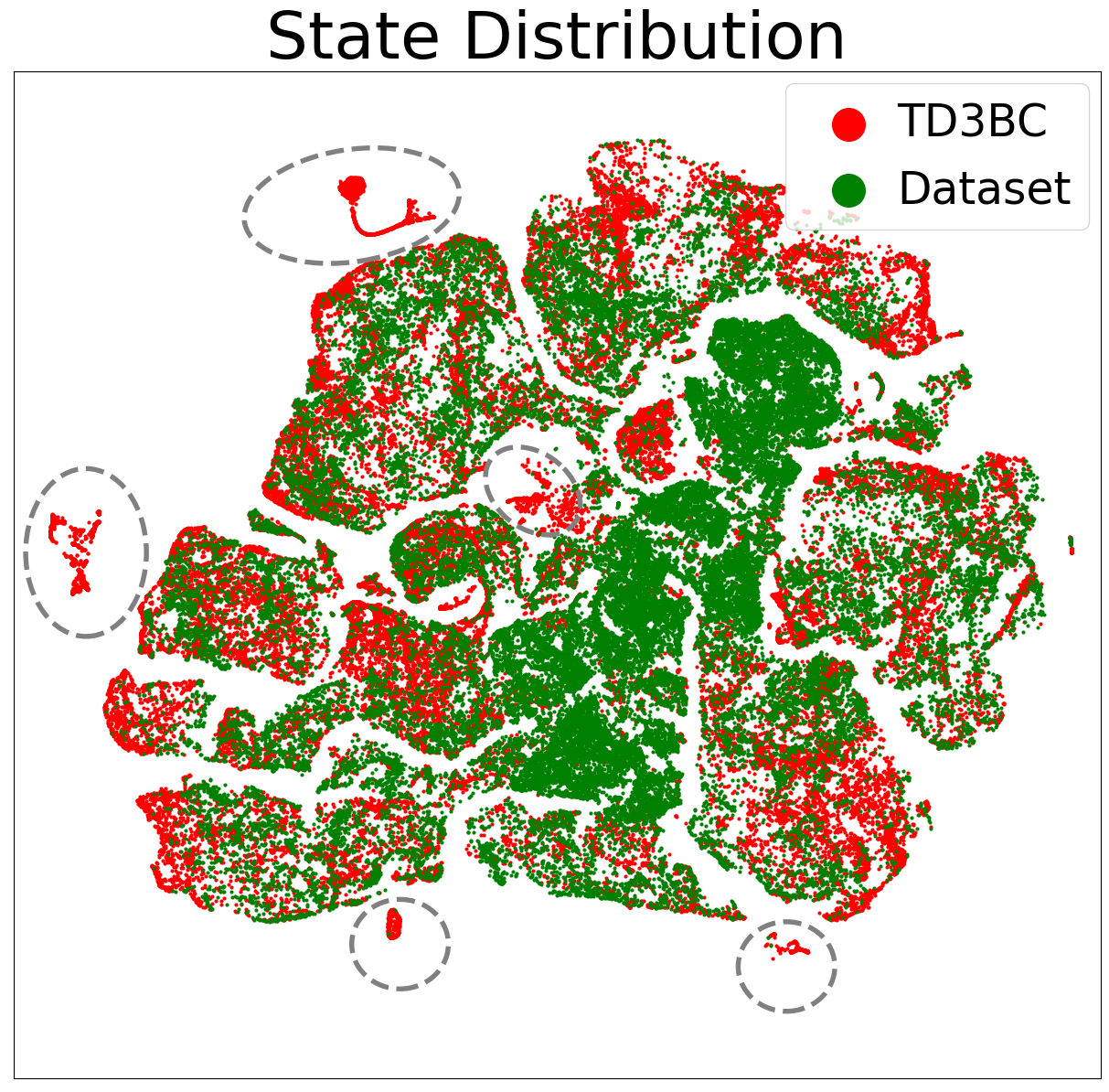}
    }\hspace{-2mm}
    \subfigure[SCAS~(Ours)]{
    \label{fig:tsne_4fig_scas}
    \includegraphics[width=0.228\textwidth]{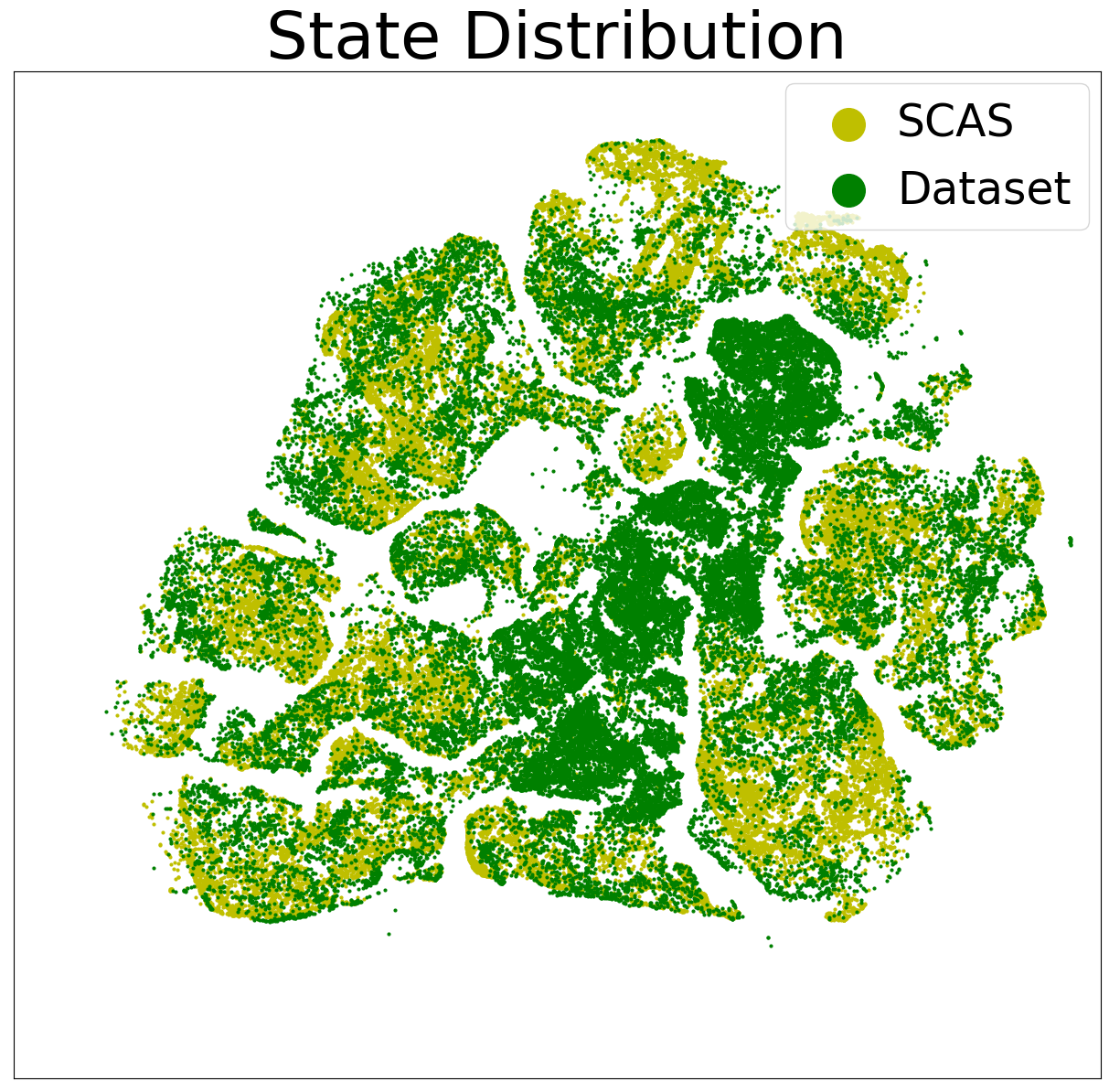}
    }\hspace{-2mm}
    \subfigure[Optimal Value]{
    \label{fig:tsne_4fig_value}
    \includegraphics[width=0.269\textwidth]{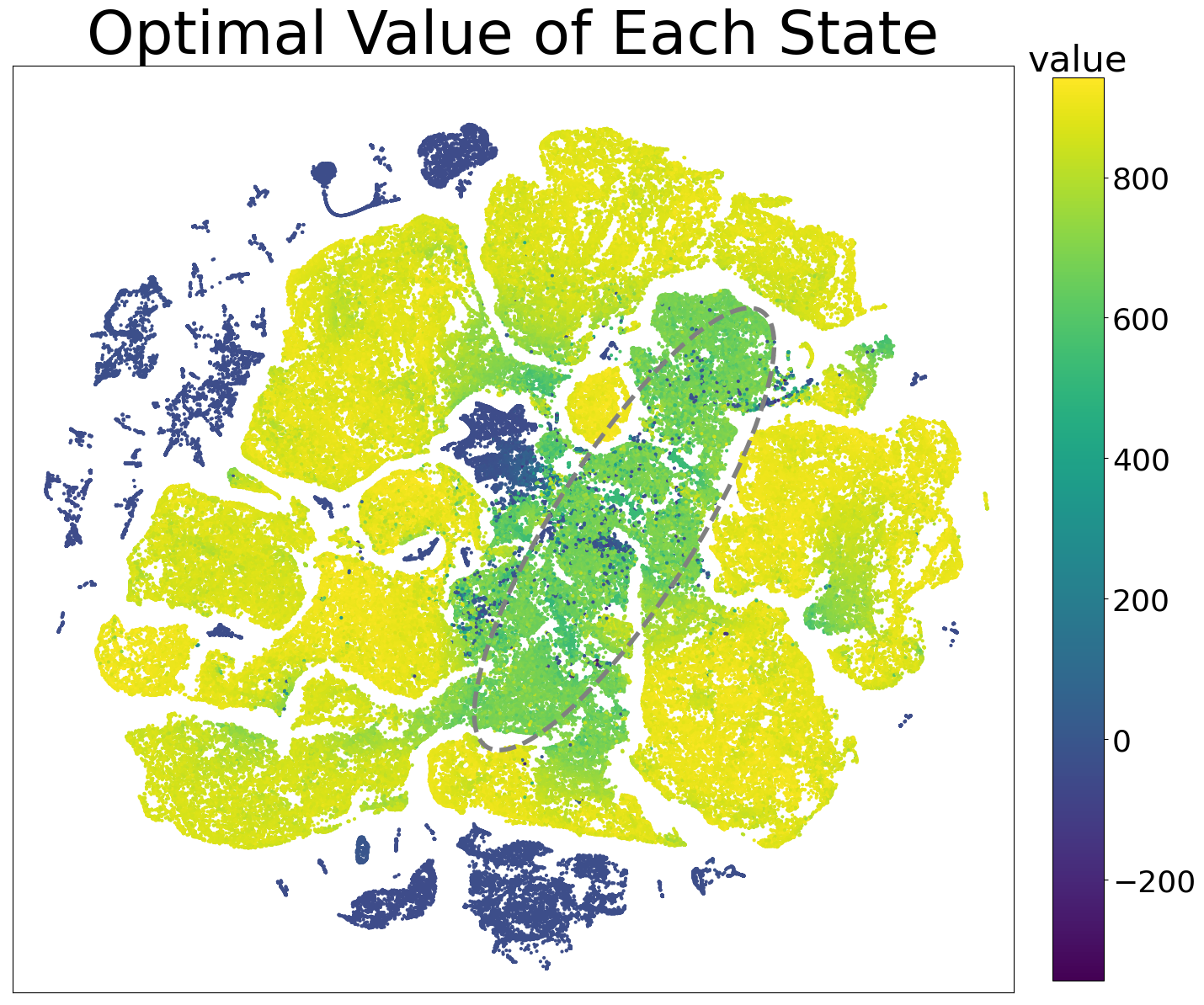}
    }\hspace{-2mm}
    \caption{\textbf{The resulting state distributions of offline RL algorithms and optimal values of states.}
(a,b,c)~The state distributions generated by the learned policies of various algorithms compared with that of the offline dataset on halfcheetah-medium-expert.
(d)~The corresponding optimal value of each state, which is obtained by running TD3 online to convergence.
\textit{\algo-induced state distribution is almost entirely within the support of the offline distribution and avoids the low-value areas}, while CQL and TD3BC tend to produce OOD states with extremely low values.}
    \label{fig:tsne_4fig}
    \vspace{-2mm}
\end{figure}

Distinguished from most previous works, this paper argues that, apart from the OOD action issue, there exists an \textit{OOD state issue} that also impairs performance yet has received limited attention in the field.
Such an issue refers to the agent encountering states out of the offline dataset during the policy deployment phase~(i.e., test phase).
The occurrence of OOD states can be attributed to OOD actions, stochastic environments, and real-world perturbations.
Since typical offline RL algorithms do not involve policy training in OOD states, the agent tends to behave in an uncontrolled manner once entering OOD states in the test phase. This can further exacerbate the state deviation from the offline dataset and lead to severe degradation in performance~\citep{levine2020offline,zhang2022state}.

In mitigating this OOD state issue, existing limited work attempts to train the policy to correct the agent from OOD states to in-distribution~(ID) states~\citep{zhang2022state,jiang2023recovering}.
Technically, \citet{zhang2022state} construct a dynamics model and a state transition model and align them to guide the agent to ID regions, while \citet{jiang2023recovering} resort to an inverse dynamics model for policy constraint.
However, they deal with the OOD state and OOD action issues separately, requiring extra OOD action suppression components and complex distribution modeling, which sacrifices computational efficiency and algorithmic simplicity.
Moreover, correcting the agent to all ID states impartially could be problematic, especially when the dataset contains substantial suboptimal states. 
As a result, the performance of prior methods also leaves considerable room for improvement.

In this paper, we aim to address these two fundamental OOD issues simultaneously by proposing a simple yet effective approach for offline RL.
We term our method SCAS due to its integration of OOD \underline{\textbf{S}}tate \underline{\textbf{C}}orrection and OOD \underline{\textbf{A}}ction \underline{\textbf{S}}uppression.
We start with solving an analytical form of a value-aware state transition distribution, which is within the dataset support but skewed toward high-value states. Then, we align it with the dynamics induced by the trained policy on perturbed states via KL divergence. 
This operation intends to correct the agent from OOD states to high-value ID states, a concept we refer to as \textit{value-aware} OOD state correction.
Through some derivations, it also eliminates the necessity of training a multi-modal state transition model.
Furthermore, we show theoretically and empirically that, while designed for OOD state correction,
\algo regularization also exhibits the effect of OOD action suppression.
We evaluate \algo on the offline RL benchmarks including D4RL~\citep{fu2020d4rl} and NeoRL~\citep{qin2022neorl}. \algo achieves excellent performance with consistent hyperparameters without additional tuning. Moreover, benefiting from its OOD state correction ability, \algo demonstrates improved robustness against environmental perturbations.

To summarize, the main contributions of this work are:
\begin{itemize}[leftmargin=*]
\item We systematically analyze the underexplored OOD state issue in offline RL and propose a simple yet effective approach \algo \textit{unifying OOD state correction and OOD action suppression}.
\item Our approach achieves \textit{value-aware} OOD state correction, which circumvents modeling complex distributions and significantly improves performance over vanilla OOD state correction methods.
\item Empirically\footnote{Our code is available at \href{https://github.com/maoyixiu/SCAS}{https://github.com/maoyixiu/SCAS}.}, our approach demonstrates superior performance on standard offline RL benchmarks and enhanced robustness in perturbed environments \textit{without additional hyperparameter tuning}.
\end{itemize}

\section{Preliminaries}
\label{preliminaries}
In reinforcement learning, we generally characterize the environment as a Markov Decision Process~(MDP) $\Mcal=(\mathcal{S}, \mathcal{A}, P, R, \gamma, d_0)$, with state space $\mathcal{S}$, action space $\mathcal{A}$, transition dynamics $P: \Scal \times \Acal \to \Delta(\Scal)$, reward function $R: \Scal \times \Acal \to \mathbb{R}$, discount factor $\gamma \in [0,1)$, and initial state distribution $d_0$~\citep{sutton2018reinforcement}. 
The agent interacts with the environment and seeks a policy $\pi: \Scal \to \Delta(\Acal)$ to maximize the expected discounted return $\eta(\pi)$:
\begin{equation}
\eta(\pi) = \mathbb{E}_{s_0\sim d_0, a_t\sim\pi(\cdot|s_t), s_{t+1}\sim P(\cdot|s_t,a_t)}\left[\sum_{t=0}^\infty\gamma^t R(s_t,a_t)\right].
\end{equation}
For any policy $\pi$, we define the value function as $V^\pi(s)=\mathbb{E}_\pi\left[\sum_{t=0}^{\infty}\gamma^t R(s_t,a_t) | s_0=s\right]$ and the state-action value function~($Q$-value function) as $Q^\pi(s,a)=\mathbb{E}_\pi\left[\sum_{t=0}^{\infty}\gamma^t R(s_t,a_t) | s_0=s,a_0=a\right]$.

\paragraph{Offline RL.} 
In offline RL, the agent 
can only access a static dataset $\Dcal=\{(s_{t}^{i},a_{t}^{i},s_{t+1}^{i},r_{t}^{i})\}$.
We denote the empirical behavior policy of $\Dcal$ by $\beta(a|s)$ and the empirical dynamics model by $M  (s'|s,a)$, both of which depict the conditional distributions observed in the dataset~\citep{fujimoto2019off}.
Typical actor-critic algorithms~\citep{silver2014deterministic,haarnoja2018soft} evaluate policy $\pi$ by minimizing Bellman loss:
\begin{equation}
\begin{aligned}
\label{eqn:Q loss}
L_{Q}(\theta) = \mathbb{E}_{(s, a, s')\sim \mathcal{D}}[(Q_\theta(s,a) - R(s,a) - \gamma \mathbb{E}_{a'\sim \pi_\phi(\cdot|s')}Q_{\theta'}(s',a'))^2],
\end{aligned}
\end{equation}
where $\pi_\phi$ and $Q_\theta$ are the parameterized policy and $Q$ function, and $Q_{\theta'}$ is a target network whose parameters are updated via Polyak averaging~\cite{mnih2015human}.

Simultaneously, policy improvement in policy iteration is achieved via maximizing the Q-value:
\begin{equation}
L_{\pi}(\phi) = -\mathbb{E}_{s \sim \mathcal{D}, a \sim \pi_\phi}\left[Q_{\theta}\left(s, a\right)\right].
\end{equation}

\paragraph{OOD action issue.}
In offline RL, \textit{OOD actions} refer to actions outside the support of the behavior policy $\beta(\cdot|s)$ at a specific state $s \in \Dcal$. Since the Q-values of OOD actions can be poorly estimated and the policy improvement is towards maximizing the estimated $Q_\theta$, the resulting policy tends to prioritize the OOD actions with overestimated values, leading to poor performance~\citep{fujimoto2019off}.

\section{OOD State Correction}
\label{sec:OOD State Correction}
The following focuses on the OOD state issue and OOD state correction in offline RL. In \cref{OOD State Issue in Offline RL}, we systematically analyze the OOD state issue, introduce the concept of OOD state correction, and point out limitations of prior methods. Then we present the proposed approach \algo in \cref{sec:Simple and Value-aware OOD State Correction}.
\subsection{OOD State Issue in Offline RL}
\label{OOD State Issue in Offline RL}

In offline RL, \textit{OOD states} refer to states not in the offline dataset.
The OOD state issue~(\cref{def:OOD state issue}) pertains to scenarios where the agent enters OOD states during the test phase, potentially resulting in catastrophic failure~\citep{levine2020offline}.
However, such a topic is rarely investigated in the literature, and existing studies lack deep insights.
We mathematically formulate the OOD state issue as follows.
\begin{definition}[OOD state issue]
\label{def:OOD state issue}
There exists $s \in \mathcal{S}$, such that $d_\mathcal {M_T}^\pi(s)>0$ and $d_\mathcal D(s)=0$, where $\mathcal {M_T}$ is the MDP of the test environment, $\pi$ is any learned policy, $d_\mathcal {M_T}^\pi$ is the state probability density induced by $\pi$ in $\mathcal {M_T}$, and $d_\mathcal D$ is the state probability density in the offline dataset.
\end{definition}

\paragraph{Origins and consequence of OOD states.}
During the test phase, the OOD states occur primarily in three scenarios:
(i) OOD actions: the learned policy, not perfectly constrained within the support of the behavior policy, executes unreliable OOD actions, leading to  OOD states. 
(ii) Stochastic environment: the initial state of the actual environment may fall outside the offline dataset.
In addition, stochastic dynamics can also lead to states outside the dataset, even when taking ID actions in ID states.
(iii) Perturbations: commonly seen in real-world robot applications, some unexpected perturbations can propel the agent into OOD states~(e.g., wind, human interference).

During offline training, the typical Bellman updates involve only ID states, and the policies in OOD states are not trained.
As a result, when encountering OOD states in the test phase, the agent would exhibit uncontrolled behavior, and the state deviation from the offline dataset can be further exacerbated over time steps, severely degrading performance~\citep{levine2020offline}.

\paragraph{OOD state correction.}
To mitigate this OOD state issue, an intuitive solution is to train a policy capable of correcting the agent from OOD states to ID states, a concept known as \textit{OOD state correction}~\citep{zhang2022state}. Specifically, during offline training, we can perturb the original state $s$ in the dataset into $\hat s$ to generate substantial OOD states. Then consider the scenario where the agent starts from $\hat s$, follows the trained policy $\pi$, and transitions to the next state $\hat s'$. To reduce state deviation, $\hat s'$ is expected to be close to the offline dataset. Thus we can align the distribution of $\hat s'$ with an ID state distribution to regularize the policy and achieve OOD state correction.

Continuing the above train of thought, 
SDC~\citep{zhang2022state} generates the ID state distribution by feeding the original state $s$ into a trained state transition model $N  (s'|s)$ of the dataset.
This model characterizes the conditional state transition distribution in the dataset and is implemented by a conditional variational auto-encoder~(CVAE)~\citep{sohn2015learning}.
After pretraining a dynamics model $M  (s'|s,a)$ and the state transition model $N  (s'|s)$, SDC introduces the following policy regularizer for OOD state correction:
\begin{equation}
\min_\pi \underset{s \sim \Dcal}{\E}~ \underset{\hat s \sim \Ncal_\sigma(s)}{\E} \left[ \operatorname{MMD}(M  (\cdot|\hat s, \pi(\cdot|\hat s)), N  (\cdot|s) )\right],
\end{equation}
where $\hat s$ is a Gaussian noise perturbed version of the original state $s$, $\sigma$ is the standard deviation of the Gaussian, $M  (\cdot|\hat s, \pi(\cdot|\hat s))$ is shorthand for $\E_{\hat a \sim \pi(\cdot|\hat s)} M  (\cdot|\hat s, \hat a)$, and MMD is the maximum mean discrepancy measure.
More recently, OSR~\citep{jiang2023recovering} directly aligns the trained policy distribution at the perturbed state with a CVAE inverse dynamics model to constrain the policy in OOD states.

\paragraph{Limitations.}
However, the regularizers of prior methods are only designed to deal with this OOD state issue. To mitigate OOD actions, they require an additional conservative Q learning~(CQL) term~\citep{kumar2020conservative} in value estimation to penalize Q-values of OOD actions.
In addition, the state transition distribution and the inverse dynamics distribution are multi-modal in many scenarios~\citep{moerland2023model}.
The necessity of extra OOD action suppression components and complex distribution modeling compromises their computational efficiency and algorithmic simplicity.
Moreover, correcting the agent to all ID states impartially could be problematic, particularly when the offline dataset contains a large portion of suboptimal states.
In such cases, vanilla OOD state correction can lead to suboptimal behaviors.
Consequently, there is also significant potential for improvement in the performance of prior methods.

For a more comprehensive discussion of related work, please refer to \cref{related works}.

\subsection{Value-aware OOD State Correction}
\label{sec:Simple and Value-aware OOD State Correction}

The objective of this work is to formulate a simple yet effective policy regularizer for offline RL that unifies OOD state correction and OOD action suppression. Moreover, we aim to achieve \textit{value-aware} OOD state correction,
involving the correction of the agent from OOD states to high-value ID states.

\paragraph{Value-aware state transition.}
For the ID state distribution to which the agent is corrected,
we expect a value-aware state transition distribution $N^*(\cdot|s)$ that lies within the support of the dataset state transition distribution $N  (\cdot|s)$ but is skewed toward high-value states $s'$.
To ensure stability and, more importantly, to enable our subsequently designed algorithm to circumvent modeling complex distributions, we seek a soft optimal version of it.
To this end, we consider the following problem\footnote{
Note that the regularizer $\mathrm{D_{KL}}(N^*(\cdot|s) \| N  (\cdot|s))$ can constrain the support of $N^*(\cdot|s)$ within that of $N  (\cdot|s)$,
because if $\operatorname{supp}(N^*(\cdot|s)) \nsubseteq \operatorname{supp}(N  (\cdot|s))$ at some state $s$, then $\mathrm{D_{KL}}(N^*(\cdot|s) \| N  (\cdot|s)) = \infty$.}:
\begin{equation}
\label{eq:optimize N}
\max_{N^*} ~ \underset{s \sim \Dcal}{\E} \left[\alpha   \underset{s'\sim N^*(\cdot|s)}{\E} V(s') -  \mathrm{D_{KL}}(N^*(\cdot|s) \| N  (\cdot|s))\right],
\end{equation}
where $\alpha  $ is a hyperparameter to balance the two terms.

The optimization problem above has a closed-form solution:
\begin{equation}
\label{eq:N soltion}
N^*(s'|s) = \frac{1}{Z(s)}\exp\left(\alpha   V\left(s'\right)\right) N  (s'|s),
\end{equation}
where $Z(s)=\sum_{s'} \exp\left(\alpha   V\left(s'\right)\right) N  (s'|s)$ is a normalization factor. It can be seen from Eq.~\eqref{eq:N soltion} that $\operatorname{supp}(N^*(\cdot|s)) \subseteq \operatorname{supp}(N  (\cdot|s))$. 
Note that $\alpha  $ is a key hyperparameter that controls the significance of the values of next states in \algo's OOD state correction. As $\alpha  $ increases, $N^*(\cdot|s)$ becomes more skewed toward the optimal $s'$ in the support of $N(\cdot|s)$.

\paragraph{OOD state correction.}
In order to produce substantial OOD states, we perturb each state $s \in \Dcal$ with Gaussian noise $\Ncal(0,\sigma^2)$, resulting in perturbed state $\hat s$.
It is worth noting that the dataset used for RL training remains unchanged. We perturb the states solely to formulate the regularizer.

We anticipate the following value-aware OOD state correction scenario, where the agent starts from OOD state $\hat s$, follows the trained policy $\pi$, and transitions to the high-value ID state $s'$ in the distribution of $N^*(\cdot|s)$.
To this end, we train the policy $\pi$ to align the dynamics induced by $\pi$ on the perturbed state $\hat s$ with the value-aware state transition distribution at the original state $s$ via KL divergence. 
That is, we regularize $\pi$ by minimizing:
\begin{equation}
\label{eq:reg KL}
\min_\pi \underset{s \sim \Dcal}{\E}~ \underset{\hat s \sim \Ncal_\sigma(s)}{\E} \mathrm{D_{KL}}(N^*(\cdot|s) \| M  (\cdot|\hat s, \pi(\cdot|\hat s))).
\end{equation}

By substituting the analytical solution of $N^*$ from Eq.~\eqref{eq:N soltion} into the KL divergence, we have
\begin{equation*}
\argmin_\pi \mathrm{D_{KL}}(N^*(\cdot|s) \| M  (\cdot|\hat s, \pi(\cdot|\hat s))) = \argmax_\pi \underset{s' \sim N(\cdot|s)}{\E}~ \left[ \frac{\exp\left(\alpha   V\left(s'\right)\right)}{Z(s)} \log M(s'|\hat s, \pi(\cdot|\hat s)) \right].
\end{equation*}

Note that $N  $ is the state transition distribution in the dataset, and $s\sim\Dcal,s'\sim N  (\cdot|s)$ is equivalent to $(s,s')\sim \Dcal$. 
Thus minimizing Eq.~\eqref{eq:reg KL} is equivalent to maximizing following regularizer:
\begin{equation}
\label{eq:reg IS}
R(\pi) = \underset{(s,s') \sim \Dcal}{\E}~ \underset{\hat s \sim \Ncal_\sigma(s)}{\E} \left[\frac{\exp\left(\alpha   V\left(s'\right)\right)}{Z(s)} \log M  (s'|\hat s, \pi(\cdot|\hat s))\right].
\end{equation}

As a result, $R(\pi)$ effectively eliminates the need for a pre-trained multi-modal state transition model~($N$ or $N^*$) and enables direct sampling from the dataset for optimization.

However, the normalization factor $Z(s)$ in $R(\pi)$ can be challenging to compute. We note that the regularizer $R(\pi)$ is derived from the minimization of the KL divergence in Eq.~\eqref{eq:reg KL}. 
Since we aim to minimize this KL at every state $s$ in $\Dcal$
and $Z(s)$ only affects the relative weights at different $s$,
it matters less to precisely restore the correct state weights in $\Dcal$ by computing $Z(s)$, which is empirically hard to estimate and may bring more instability. 
Thus, we replace $Z(s)$ in $R(\pi)$ with an empirical normalizer $\exp(\alpha   V(s))$ for computational stability:
\begin{equation}
\label{eq:reg mle}
 R_1(\pi) = \underset{(s,s') \sim \Dcal}{\E}~ \underset{\hat s \sim \Ncal_\sigma(s)}{\E} \left[\frac{\exp\left(\alpha   V\left(s'\right)\right)}{\exp\left(\alpha   V\left(s\right)\right)} \log M  (s'|\hat s, \pi(\cdot|\hat s))\right].
\end{equation}

We provide further rationale behind this choice of the empirical normalizer in \cref{appsec:normalizer}.

\paragraph{Tractable optimization.} 
Now we shift focus to the optimization of $ R_1(\pi)$. The expectation with respect to $\pi$ can be moved outside the logarithm by Jensen's inequality:
\begin{equation}
\begin{aligned}
\label{eq:reg Jensen}
R_1(\pi)  \geq \underset{(s,s') \sim \Dcal}{\E}~ \underset{\hat s \sim \Ncal_\sigma(s)}{\E} ~\underset{a \sim \pi(\cdot|\hat s)}{\E} \left[\frac{\exp\left(\alpha   V\left(s'\right)\right)}{\exp\left(\alpha   V\left(s\right)\right)} \log M  (s'|\hat s, a)\right],
\end{aligned}
\end{equation}

where the equality holds when $\pi$ is deterministic. In general, it is convenient to maximize the lower bound in Eq.~\eqref{eq:reg Jensen} using the reparameterization trick.
However, to ensure the equality case in Eq.~\eqref{eq:reg Jensen}, we opt to train a deterministic policy $\pi$. In this case, we can directly maximize $R_1(\pi)$ by computing the gradient of $\pi$ using automatic differentiation~\citep{paszke2017automatic}.

In contrast to model-based RL methods that typically use the learned dynamics model to roll out multi-step trajectories for policy training~\citep{janner2019trust,yu2020mopo}, our algorithm utilizes the dynamics model to propagate the gradient of policy and regularize policy training, resulting in significantly enhanced computational efficiency. Moreover, the nature of one-step dynamics prediction in our method is advantageous for maintaining relatively high prediction accuracy.

\section{Analysis of OOD Action Suppression}
\label{sec:OOD Action Suppression}

This section focuses on the OOD action issue and shows that the proposed regularizer also exhibits the effect of \textit{OOD action suppression}. In other words, it can also prevent the policy from taking OOD actions, thereby simultaneously addressing the fundamental OOD action issue in offline RL.
In offline RL, OOD actions are exclusively defined on ID states. This is because actor-critic training is limited to ID states, and any actions on OOD states would not affect training and cause the OOD action issue mentioned in \cref{preliminaries}.
Consequently, for the analysis of OOD actions, it is essential to consider ID states.
We define $\bar R,\bar R_1$ as the ID state version of $R, R_1$, where $\hat s = s$.
$\bar R$ and $\bar R_1$ can be regarded as special cases of $R$ and $R_1$, when $\hat s$ sampled from $\Ncal(s,\sigma^2)$ is equal to $s$:
\begin{align}
\bar{R}(\pi) &= \underset{(s,s') \sim \Dcal}{\E} \left[\frac{\exp\left(\alpha   V\left(s'\right)\right)}{Z(s)} \log M  (s'|s, \pi(\cdot|s))\right],\\
\bar{R}_1(\pi) &= \underset{(s,s') \sim \Dcal}{\E} \left[\frac{\exp\left(\alpha   V\left(s'\right)\right)}{\exp\left(\alpha   V\left(s\right)\right)} \log M  (s'|s, \pi(\cdot|s))\right].
\end{align}

The proposed regularizer functions as follows: when the agent encounters OOD states, it drives the agent to choose actions leading to ID states, as discussed in \cref{sec:Simple and Value-aware OOD State Correction}. When the agent is in ID states, the ID state part of it comes into play. In the following, we show that it helps circumvent taking OOD actions by analyzing the maximizer of $\bar R, \bar R_1$ in tabular MDPs.

\begin{proposition}[]
\label{thm:optimal deter dynamics}
Suppose that the environment dynamics is deterministic,
then both $\bar{R}(\pi)$ and $\bar{R}_1(\pi)$ achieve their global maximum at the policy $\pi^*$, where\footnote{Here for clarity, we use the notation $M  $ with slightly different meanings in different cases: in the stochastic setting, $M  : \Scal \times \Acal \to \Delta(\Scal)$; in the deterministic setting, $M  : \Scal \times \Acal \to \Scal$.}
\begin{equation}
\pi^*(a|s) = \frac{1}{Z(s)}\exp\left(\alpha   V\left(M  (s, a) \right)\right)  \beta(a|s)
\end{equation}
The support of $\pi^*$ is within that of the behavior policy $\beta$:
\begin{equation}
    \operatorname{supp}(\pi^*(\cdot|s)) \subseteq \operatorname{supp}(\beta(\cdot|s)), ~\forall s\sim \Dcal
\end{equation}
and $\pi^*$ makes the following equation hold:
\begin{equation}
N^*(\cdot|s) = M  (\cdot| s, \pi^*(\cdot|s)), ~\forall s\sim \Dcal
\end{equation}
\end{proposition}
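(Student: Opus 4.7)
The plan is to recognize $\bar R(\pi)$ and $\bar R_1(\pi)$ as (up to a $\pi$-independent factor) the same objective already encountered in Eq.~\eqref{eq:reg KL}, namely the negative KL divergence between $N^*(\cdot|s)$ and $M(\cdot|s,\pi(\cdot|s))$, and then show that in the deterministic case the proposed $\pi^*$ makes this KL vanish pointwise in $s$. The support claim will then drop out from the explicit formula for $\pi^*$.

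First I would rewrite the expectations. Since $(s,s')\sim \Dcal$ is the same as $s\sim\Dcal,\,s'\sim N(\cdot|s)$, and since by Eq.~\eqref{eq:N soltion} one has $\frac{\exp(\alpha V(s'))}{Z(s)}N(s'|s)=N^*(s'|s)$, the regularizer $\bar R(\pi)$ becomes
\begin{equation*}
\bar R(\pi)=\underset{s\sim\Dcal}{\E}\sum_{s'}N^*(s'|s)\log M(s'|s,\pi(\cdot|s))=-\underset{s\sim\Dcal}{\E}\bigl[\mathrm{D_{KL}}(N^*(\cdot|s)\,\|\,M(\cdot|s,\pi(\cdot|s)))+H(N^*(\cdot|s))\bigr].
\end{equation*}
Since the entropy term does not depend on $\pi$, maximizing $\bar R(\pi)$ is equivalent to minimizing the inner KL at every $s\in\Dcal$, and this minimum equals zero iff $M(\cdot|s,\pi(\cdot|s))=N^*(\cdot|s)$. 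For $\bar R_1(\pi)$, the replacement of $Z(s)$ with $\exp(\alpha V(s))$ only rescales the integrand by the positive, $\pi$-independent weight $Z(s)/\exp(\alpha V(s))$, so the same pointwise KL argument applies.

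The key step is verifying that $\pi^*$ realizes $M(\cdot|s,\pi^*(\cdot|s))=N^*(\cdot|s)$ in the deterministic setting. Here $M(s'|s,a)=\mathbbm{1}[s'=M(s,a)]$, so for any policy $\pi$,
\begin{equation*}
M(s'|s,\pi(\cdot|s))=\sum_{a:\,M(s,a)=s'}\pi(a|s).
\end{equation*}
Plugging in the definition of $\pi^*$ and pulling the factor $\exp(\alpha V(s'))/Z(s)$ out of the sum (it is constant on the fiber $\{a:M(s,a)=s'\}$), this collapses to $\frac{\exp(\alpha V(s'))}{Z(s)}\sum_{a:\,M(s,a)=s'}\beta(a|s)=\frac{\exp(\alpha V(s'))}{Z(s)}N(s'|s)=N^*(s'|s)$, which is exactly the desired equality. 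As a by-product, summing $\pi^*(a|s)$ over $a$ yields $\frac{1}{Z(s)}\sum_{s'}\exp(\alpha V(s'))N(s'|s)=1$, confirming that the normalization constant $Z(s)$ in $\pi^*$ coincides with the one appearing in $N^*$ and hence $\pi^*$ is a valid probability distribution. The support inclusion is then immediate from the formula: $\pi^*(a|s)>0$ forces $\beta(a|s)>0$.

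The main obstacle, and the only real subtlety, is handling the determinism carefully when writing $M(s'|s,\pi(\cdot|s))$ as a mixture over $a$: one must recognize the fiber structure $\{a:M(s,a)=s'\}$ of the map $a\mapsto M(s,a)$ and use that the weighting factor in $\pi^*$ depends on $a$ only through $M(s,a)$, which is precisely what allows the sum over the fiber to reproduce $N(s'|s)$. No stochastic-dynamics analogue would work, since then different $a$ could distribute mass across many $s'$ and the pointwise-zero-KL argument would generally fail; this is why the statement is restricted to deterministic $M$.
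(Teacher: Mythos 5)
Your proposal is correct and follows essentially the same route as the paper's proof: rewrite $\bar R$ (and $\bar R_1$, up to the positive $\pi$-independent weight $Z(s)/\exp(\alpha V(s))$) as a pointwise KL minimization against $N^*(\cdot|s)$, verify via the deterministic fiber structure $\{a: M(s,a)=s'\}$ that $\pi^*$ is normalized and makes $M(\cdot|s,\pi^*(\cdot|s))=N^*(\cdot|s)$ so the KL vanishes, and read the support inclusion off the explicit formula. No gaps to flag.
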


Under the deterministic dynamics condition, \cref{thm:optimal deter dynamics} shows that $\pi^*$ is constrained within the support of the behavior policy. Thus, our regularizer helps to keep the policy from taking OOD actions. Moreover, $\pi^*$ is able to exactly align $M  (\cdot| s, \pi^*(\cdot|s))$ with $N^*(\cdot|s)$, indicating the guidance of the agent to the high-value ID state distributions.

Furthermore, we show in \cref{thm:optimal stoc dynamics} that even under stochastic dynamics, the optimization of $\bar R$ and $\bar R_1$ still yields policies constrained within the support of $\beta$. Hence, \algo also exhibits the effect of OOD action suppression in this more general scenario.

\begin{proposition}[]
\label{thm:optimal stoc dynamics}
When the dynamics is stochastic, the maximizers of both $\bar{R}(\pi)$ and $\bar{R}_1(\pi)$ are constrained within the support of the behavior policy:
\begin{align}
    \operatorname{supp}(\pi^*(\cdot|s)) &\subseteq \operatorname{supp}(\beta(\cdot|s)),~\forall s\sim \Dcal\\
    \operatorname{supp}(\pi_1^*(\cdot|s)) &\subseteq \operatorname{supp}(\beta(\cdot|s)), ~\forall s\sim \Dcal
\end{align}
\end{proposition}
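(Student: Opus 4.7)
The plan is to reduce both cases to the same support-preservation argument, leveraging the fact that the empirical dynamics model $M$ inherits the support structure of the dataset: whenever $a \notin \operatorname{supp}(\beta(\cdot|s))$, the pair $(s,a)$ is absent from $\mathcal{D}$, so $M(\cdot|s,a) \equiv 0$ as an empirical conditional. This modeling observation is what ultimately forces any maximizer to place zero mass on OOD actions, without needing a closed-form characterization of the maximizer (which was the route taken in \cref{thm:optimal deter dynamics}).

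Given any policy $\pi$, define $p(s) = \sum_{a \in \operatorname{supp}(\beta(\cdot|s))} \pi(a|s) \in [0,1]$ and, when $p(s) > 0$, the renormalized ID policy $\tilde\pi(a|s) = \pi(a|s)/p(s)$ on $\operatorname{supp}(\beta(\cdot|s))$. The degenerate case $p(s)=0$ at some $s$ with $d_{\mathcal{D}}(s)>0$ is discarded immediately, since it yields $M(s'|s,\pi(\cdot|s)) = 0$ and an objective value of $-\infty$. For $p(s) > 0$, the vanishing of $M$ on OOD actions gives the clean factorization
\begin{equation*}
M(s'|s,\pi(\cdot|s)) = \sum_{a \in \operatorname{supp}(\beta(\cdot|s))} \pi(a|s)\, M(s'|s,a) = p(s)\, M(s'|s,\tilde\pi(\cdot|s)),
\end{equation*}
and hence $\log M(s'|s,\pi(\cdot|s)) = \log p(s) + \log M(s'|s,\tilde\pi(\cdot|s))$.

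Plugging this identity into $\bar R(\pi)$ splits it into $\bar R(\tilde\pi)$ plus a residual $\log p(s)$ term weighted by $\exp(\alpha V(s'))/Z(s)$. Because $\sum_{s'} N(s'|s)\,\exp(\alpha V(s'))/Z(s) = \sum_{s'} N^*(s'|s) = 1$, the residual collapses to $\mathbb{E}_{s \sim d_{\mathcal{D}}}[\log p(s)] \leq 0$. The same manipulation applied to $\bar R_1(\pi)$ produces a residual $\mathbb{E}_{s \sim d_{\mathcal{D}}}\left[\frac{Z(s)}{\exp(\alpha V(s))}\log p(s)\right]$, which is still $\leq 0$ since the multiplier $Z(s)/\exp(\alpha V(s))$ is strictly positive. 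In either case equality holds iff $p(s)=1$ for $d_{\mathcal{D}}$-almost every $s$, so $\bar R(\pi) \leq \bar R(\tilde\pi)$ and $\bar R_1(\pi) \leq \bar R_1(\tilde\pi)$, forcing every maximizer to satisfy $\operatorname{supp}(\pi(\cdot|s)) \subseteq \operatorname{supp}(\beta(\cdot|s))$ on the data support, as claimed.

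The main obstacle, and the step I would be most careful about, is pinning down the empirical-dynamics convention precisely enough that $M(\cdot|s,a) \equiv 0$ for OOD actions is a definition rather than a sleight of hand; this is what allows an honest comparison of $\pi$ and $\tilde\pi$ via the single scalar shift $\log p(s)$. Once that convention is fixed, no use is made of the stochasticity of the ID conditionals $M(\cdot|s,a)$ or of the specific form of $V$, which is why the same proof handles both $\bar R$ and $\bar R_1$ simultaneously and sidesteps the need for an explicit maximizer formula as in the deterministic case.
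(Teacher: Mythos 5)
Your proof is correct and follows essentially the same route as the paper's: both arguments hinge on the convention that the empirical dynamics $M(\cdot|s,a)$ vanishes for actions outside $\operatorname{supp}(\beta(\cdot|s))$, and both compare an arbitrary $\pi$ against its projection onto the behavior support to conclude any maximizer of $\bar R$ or $\bar R_1$ must be support-constrained. The only difference is cosmetic: the paper redistributes the OOD mass $\epsilon(s)$ uniformly over the $n(s)$ in-support actions and derives an inequality term by term, whereas you renormalize the in-support part multiplicatively, obtaining the exact identity $\bar R(\pi)=\bar R(\tilde\pi)+\E_{s}\left[w(s)\log p(s)\right]$ with $w(s)>0$, which makes the strictness and equality condition immediate.
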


\section{Implementation Details}

\begin{wrapfigure}{R}{0.52\textwidth}
\vspace{-0.7cm}
\begin{minipage}{0.52\textwidth}
\begin{algorithm}[H]
\caption{\algo}
\label{alg:practical}
\begin{algorithmic}[1] 
\STATE Initialize dynamics model $M_\omega$, policy network $\pi_\phi$, $Q$-network $Q_\theta$, and target $Q$-network $Q_{\theta'}$
\STATE // {\bfseries Dynamics Model Training}
\FOR{each gradient step}
\STATE Update $\omega$ by minimizing $L_{M}(\omega)$ in Eq.~\eqref{eqn:mle}
\ENDFOR
\STATE // {\bfseries Policy Training}
\FOR{each gradient step}
\STATE Update $\theta$ by minimizing $L_Q(\theta)$ in Eq.~\eqref{eqn:Q loss}
\STATE Update $\phi$ by maximizing $J_{\pi}(\phi)$ in Eq.~\eqref{eq:PI}
\STATE Update target network: $\theta' \leftarrow (1-\tau){\theta'} + \tau\theta$
\ENDFOR
\end{algorithmic}
\end{algorithm}
\end{minipage}
\vspace{-0.1cm}
\end{wrapfigure}

\algo is easy to implement and we design the practical algorithm to be as simple as possible, retaining algorithmic simplicity and improving computational efficiency.

\paragraph{Dynamics model.}
We employ a deterministic dynamics model $M_\omega$. The loss for training the model is 
\begin{equation}
\label{eqn:mle}
L_M(\omega)=\underset{(s,a,s')\sim \mathcal{D}}{\mathbb{E}}[\| M_\omega(s, a)-s'\|^2_2]
\end{equation}

\paragraph{Policy improvement.}
With a deterministic model, we replace the log-likelihood in $R_1(\pi)$ with mean squared error. 
It is a common approach in RL algorithms to convert a maximum likelihood estimation problem into a regression problem when dealing with Gaussians with fixed variance~\citep{fujimoto2021minimalist}. 
As discussed in \cref{sec:Simple and Value-aware OOD State Correction}, we also adopt a deterministic policy model $\pi_\phi$. 
Thus, we have the following policy regularizer:
\begin{equation}
\begin{aligned}
\label{eq:reg mse}
R_2(\pi_\phi) = \underset{(s,s') \sim \Dcal}{\E}~ \underset{\hat s \sim \Ncal_\sigma(s)}{\E} \left[\frac{\exp\left(\alpha   V_\theta\left(s'\right)\right)}{\exp\left(\alpha   V_\theta\left(s\right)\right)} \| M_\omega(\hat s, \pi_\phi(\hat s))-s'\|^2_2 \right],
\end{aligned}
\end{equation}
where $V_\theta(s) = Q_\theta(s,\bar{\pi}_\phi(s))$ and $\bar{\pi}_\phi$ means $\pi_\phi$ with detached gradients. Using deterministic policy also simplifies the training process without learning a $V$-function.
Combining $R_2(\pi_\phi)$ with the standard policy improvement objective, we update the policy by maximizing:
\begin{equation}
\begin{aligned}
\label{eq:PI}
J_{\pi}(\phi) = (1-\lambda) \mathbb{E}_{s \sim \mathcal{D}}\left[Q_{\theta}\left(s, \pi_\phi(s)\right)\right] + \lambda R_2(\pi_\phi),
\end{aligned}
\end{equation}
where $\lambda$ is a hyperparameter to balance the two terms.
Additionally, following TD3+BC~\citep{fujimoto2021minimalist}, we also normalize $Q_{\theta}$ in the first term in each mini-batch to maintain a balanced scale across tasks.

\textbf{Overall algorithm.~~}
Putting everything together, we present our final algorithm in \cref{alg:practical}.

\section{Experiments}\label{sec:experiments}
In this section, we conduct several experiments to examine the performance and properties of \algo. 
Please refer to \cref{appsec:exp_details,appsec:Additional Experimental Results} for experimental details and additional results.

\subsection{Empirical Evidence of OOD State Correction and OOD Action Suppression}
\paragraph{OOD state correction.}
To examine the OOD state correction ability, we compare the state distributions generated by the learned policies of different algorithms with the state distribution of the offline dataset. In detail, we first train \algo, CQL~\citep{kumar2020conservative}, and TD3+BC~\citep{fujimoto2021minimalist}, and then collect 50,000 samples by running the trained policies separately. We also sample 50,000 states randomly from the offline dataset for comparison.
\cref{fig:tsne_4fig_cql,fig:tsne_4fig_td3bc,fig:tsne_4fig_scas} plot the state distributions in halfcheetah-medium-expert~\citep{fu2020d4rl} with t-SNE~\citep{van2008visualizing},
and \cref{fig:tsne_4fig_value} visualizes the optimal value of each state. We access these values from the learned value function obtained by running TD3~\citep{fujimoto2018addressing} online to convergence.

In \cref{fig:tsne_4fig_cql,fig:tsne_4fig_td3bc}, we observe that the policies learned by CQL and TD3+BC tend to produce OOD states. As depicted in \cref{fig:tsne_4fig_value}, these OOD states have extremely low values, so entering them can be detrimental to performance. 
In contrast, the state distribution induced by \algo is almost entirely within the support of the offline distribution, demonstrating the OOD state correction ability of \algo.
Moreover, we also note that in the low-value area of the offline state distribution~(the grey circle in \cref{fig:tsne_4fig_value}), \algo exhibits a very low state density, which could be attributed to \algo's value-aware OOD state correction.
We refer the reader to \cref{appsec:Additional Results on OOD State Correction} for additional experiments validating the OOD state correction effects.

\begin{wrapfigure}{r}{8cm}
    \centering
    \vspace{-0.45cm}
    \hspace{-0.1cm}
    \includegraphics[scale=0.15]{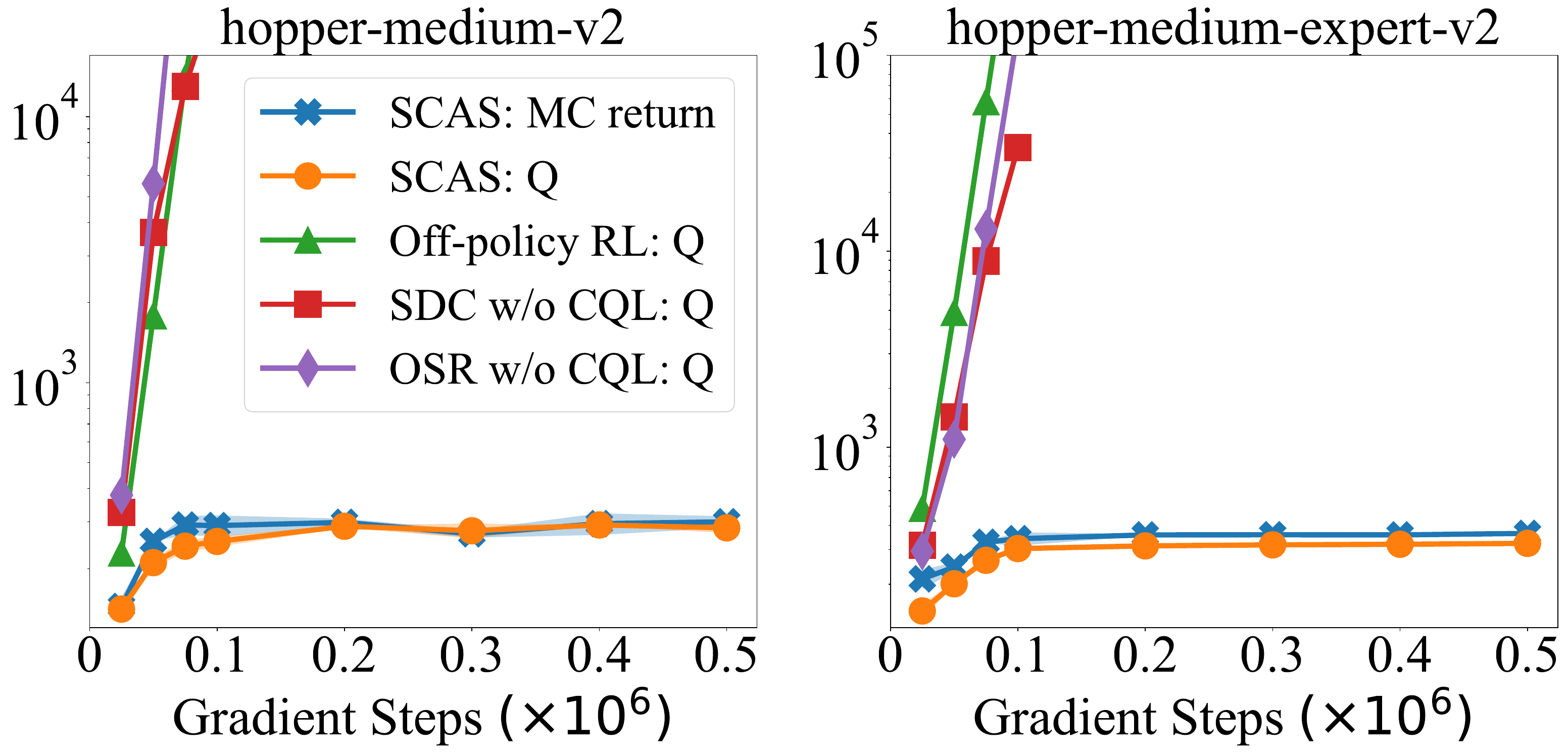}
    \vspace{-0.2cm}
    \caption{Oracle Q-values of \algo~(estimated by MC return) and learned Q-values of \algo and other algorithms across optimization steps. 
Only \algo's OOD state correction term can achieve OOD action suppression and prevent value over-estimation (divergence).}
    \label{fig:Q value}
    \vspace{-0.2cm}
\end{wrapfigure}

\paragraph{OOD action suppression.}
We empirically evaluate the OOD action suppression effects through the lens of value estimates. 
We compare \algo with three baselines: (1) ordinary off-policy RL which is \algo with $\lambda=0$~(all other implementations are the same); (2) SDC~\cite{zhang2022state} without additional CQL~\citep{kumar2020conservative} term to suppress OOD actions; (3) OSR~\cite{jiang2023recovering} without additional CQL term.
We conduct experiments on D4RL datasets~\citep{fu2020d4rl}. 
Since value over-estimation~(divergence) is the main consequence and evidence of OOD actions~\citep{fujimoto2019off}, 
we plot the learned Q-values of \algo and the baselines in \cref{fig:Q value}. We also include the oracle Q-values of \algo by rollouting the trained policy for $1,000$ episodes and evaluating the Monte-Carlo return. Additional results are provided in \cref{appsec:Additional Value Estimation Results}.

The results show that the learned Q-values of ordinary off-policy RL, SDC without CQL, and OSR without CQL diverge at early learning stages, suggesting that the algorithms suffer from severe OOD actions. By contrast, the learned Q-values of \algo stay close to the oracle Q-values. This indicates that \algo regularization alone is able to suppress OOD actions.

\subsection{Comparisons on Offline RL Benchmarks}

\begin{table}[t]
\caption{Averaged normalized scores on Gym locomotion and AntMaze tasks over five random seeds.
}\label{tab:baselines}
  \small
    \footnotesize
  \centering
\setlength{\tabcolsep}{3.0pt}
\begin{tabular}{@{}l cccccccc c@{}}
    \toprule
Dataset~(v2) & BC  & MOPO  & OneStep & TD3BC & CQL   & IQL    &OSR &SDC & \algo~(Ours) \\ \midrule
halfcheetah-med & 42.0 & \textbf{73.1} & 50.4 & 48.3 & 47.0 & 47.4  &45.1$\pm$0.8&45.9$\pm$0.5&  46.6$\pm$0.2 \\
hopper-med & 56.2 & 38.3 &  {87.5} & 59.3 & 53.0 & 66.2 &62.0$\pm$3.6 &64.7$\pm$3.5& \textbf{102.5$\pm$0.3} \\
walker2d-med & 71.0 & 41.2 &  \textbf{84.8} & \textbf{83.7} & 73.3 & 78.3   &\textbf{80.1$\pm$1.8}&\textbf{82.7$\pm$1.9}& \textbf{82.3$\pm$3.0} \\
halfcheetah-med-rep & 36.4 &\textbf{69.2} & 42.7 & 44.6 & 45.5 & 44.2 &43.3$\pm$0.2 &45.1$\pm$0.5&  44.0$\pm$0.3 \\
hopper-med-rep & 21.8 & 32.7 &  \textbf{98.5} & 60.9 & 88.7 & 94.7  &42.1$\pm$12.3&94.8$\pm$6.5&  \textbf{101.6$\pm$1.0} \\
walker2d-med-rep & 24.9 & 73.7 & 61.7 & \textbf{81.8} & \textbf{81.8} & 73.8 &\textbf{78.1$\pm$1.8}&\textbf{78.5$\pm$6.0}&  \textbf{78.1$\pm$4.5} \\
halfcheetah-med-exp & 59.6 & 70.3 & 75.1 & \textbf{90.7} & 75.6 & 86.7 &63.7$\pm$14.5 &76.3$\pm$5.2&  \textbf{91.7$\pm$2.7} \\
hopper-med-exp & 51.7 & 60.6 &  \textbf{108.6} & 98.0 &  {105.6} & 91.5 &78.9$\pm$16.4&99.9$\pm$8.5   & \textbf{109.7$\pm$3.5} \\
walker2d-med-exp & 101.2 & 77.4 &  \textbf{111.3} &  \textbf{110.1} &  \textbf{107.9} &  \textbf{109.6} &\textbf{108.1$\pm$4.4}& \textbf{109.2$\pm$1.4}& \textbf{108.4$\pm$3.7} \\
halfcheetah-rand & 2.6 & \textbf{35.9} & 2.3 & 11.0 & 17.5 & 13.1 &1.6$\pm$0.1 &14.2$\pm$0.7&  12.2$\pm$3.2 \\
hopper-rand & 4.1 & 16.7 & 5.6 & 8.5 & 7.9 & 7.9  &3.7$\pm$2.6&3.1$\pm$2.8&  \textbf{31.4$\pm$0.1} \\
walker2d-rand & 1.2 & 4.2 & \textbf{6.9} & 1.6 & 5.1 & 5.4  &-0.1$\pm$0.0&0.2$\pm$0.4& 1.4$\pm$1.1 \\
\midrule
locomotion total & 472.7 & 593.3 & 735.4 & 698.5 & 708.9 & 718.8 &606.7   & 714.6 &  \textbf{810.1} \\ 
\midrule \midrule
antmaze-umaze & 66.8 & 0.0 & 54.0 & 73.0 & 82.6 & \textbf{89.6} &87.4$\pm$5.0 &81.4$\pm$3.8& \textbf{90.4$\pm$4.3} \\
antmaze-umaze-div & 56.8 & 0.0 & 57.8 & 47.0 & 10.2 & \textbf{65.6} &55.6$\pm$8.0  &49.6$\pm$10.4&   \textbf{63.8$\pm$16.7} \\
antmaze-med-play & 0.0 & 0.0 & 0.0 & 0.0 & 59.0 &   \textbf{76.4} &22.6$\pm$7.6  &55.0$\pm$9.6&   \textbf{76.6$\pm$3.9} \\
antmaze-med-div & 0.0 & 0.0 & 0.6 & 0.2 & 46.6 &   {72.8} &19.6$\pm$5.8  &56.6$\pm$10.3&   \textbf{80.4$\pm$5.4} \\
antmaze-large-play & 0.0 & 0.0 & 0.0 & 0.0 & 16.4 & 42.0 &0.0$\pm$0.0  &20.8$\pm$8.0&   \textbf{49.0$\pm$4.0} \\
antmaze-large-div & 0.0 & 0.0 & 0.2 & 0.0 & 3.2 &   46.0 &0.0$\pm$0.0  &25.8$\pm$7.5&   \textbf{50.6$\pm$7.2} \\\midrule
antmaze total & 123.6 & 0.0 & 112.6 & 120.2 & 218   & 392.4  &185.2& 289.2&   \textbf{410.8} \\\midrule \midrule
runtime & 30m &  900m & 120m & 60m & 250m   & 100m   &300m & 420m &  140m \\\midrule
hyperparameter tuning & \textbf{w/o}  & w/ & \textbf{w/o} & \textbf{w/o} & w/   & \textbf{w/o}  &w/ &w/ & \textbf{w/o} \\ \bottomrule
\end{tabular}%
\end{table}

\textbf{Tasks.~~}
We evaluate \algo on D4RL~\cite{fu2020d4rl} and NeoRL~\cite{qin2022neorl} benchmarks. In D4RL, we conduct experiments on Gym locomotion tasks and much more challenging AntMaze tasks. Due to the space limit, \textit{the results on NeoRL} are deferred to \cref{apptab:NeoRL} in \cref{appsec:NeoRL}.

\textbf{Baselines.~~}
We compare \algo with prior state-of-the-art offline RL methods as well as the ones specifically designed for OOD state correction, including BC~\citep{pomerleau1988alvinn}, MOPO~\citep{yu2020mopo}, OneStep RL~\citep{brandfonbrener2021offline}, CQL~\citep{kumar2020conservative}, TD3+BC~\citep{fujimoto2021minimalist}, IQL~\citep{kostrikov2022offline}, SDC~\citep{zhang2022state} and OSR~\citep{jiang2023recovering}.

\textbf{Hyperparamter tuning.~~}
Offline RL methods are appealing for their ability to generate effective policies without online interaction.
Nevertheless, many existing offline RL works involve dataset-specific hyperparameter tuning. 
The reduction of hyperparameter tuning is crucial for improving practical applicability.
In this work, \algo uses \textit{a single set of hyperparameters for all datasets} in D4RL and NeoRL benchmarks to obtain the reported results.

\textbf{Comparisons with baselines.~~}
On D4RL, comparisons of performance, runtime, and hyperparameter tuning information are shown in \cref{tab:baselines}. 
We refer the reader to \cref{appsec:learning curves} for learning curve details of \algo.
On the Gym locomotion tasks, \algo outperforms prior methods on most datasets and achieves the highest total score with a single set of hyperparameters.
On the challenging AntMaze tasks, \algo performs better than IQL and outperforms other methods by a very large margin.
In NeoRL~(\cref{apptab:NeoRL}), \algo performs comparably to MOBILE~\citep{sun2023model} and outperforms other baselines.

\textbf{Runtime.~~} 
We present the runtime of algorithms at the bottom of \cref{tab:baselines}. \algo exhibits significantly lower runtime than MOPO, SDC, and OSR and is comparable to other model-free baselines.

\textbf{Generality.~~}
SCAS is a generic model-based regularizer that can be easily integrated into existing offline RL algorithms. The corresponding results and analysis are provided in \cref{appsec:combine}.

\subsection{Comparisons in Perturbed Environments}
\begin{figure}
	\centering
	\includegraphics[width=\textwidth]{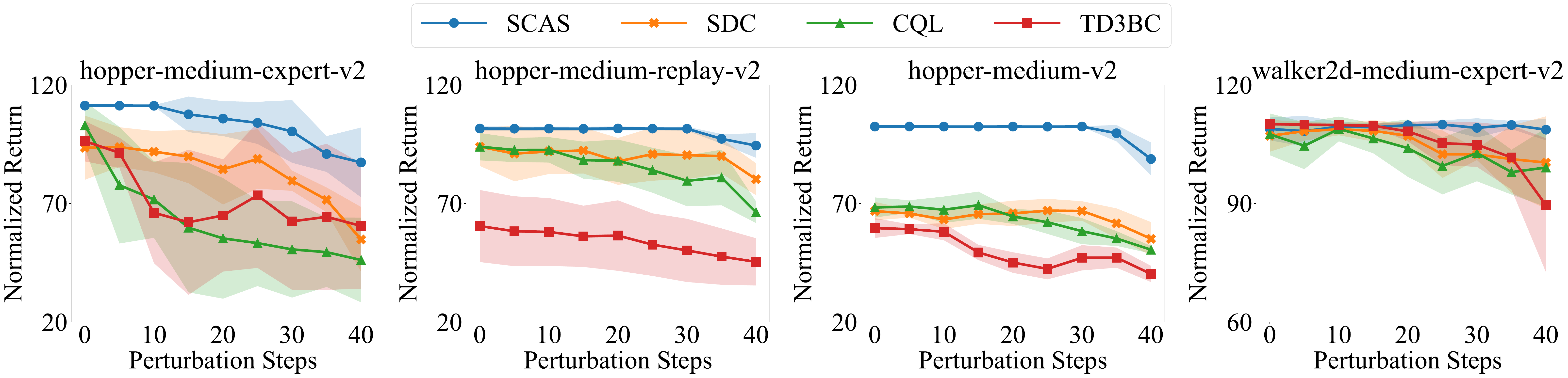}
\vspace{-4mm}
\caption{Comparisons in the perturbed environments with varying perturbation levels.
The perturbation steps are the steps of Gaussian noise added to the conducted actions in an episode.
\algo exhibits better robustness against environmental perturbations during the test phase.
}
\label{fig:perturb}
\vspace{-2mm}
\end{figure}
In this section, we evaluate the algorithms in a more real-world setting where the agent receives uncertain perturbations during test time. 
OOD state correction is even more critical in such scenarios since the agent can enter OOD states after perturbation. 
To simulate this scenario, we add varying steps of Gaussian noise with a magnitude of $0.5$ to the actions conducted by the policy during test time.
Specifically, the policy is trained on standard D4RL datasets but is tested in the perturbed environments. 
We control the strength of perturbations by adjusting the number of perturbation steps.

\cref{fig:perturb} shows the results of TD3+BC, CQL, SDC, and \algo on various datasets over five random seeds.
We observe that \algo consistently outperforms previous methods across different perturbation levels and also exhibits less performance degradation against perturbations.
Therefore, \algo enjoys better robustness against perturbations in the complex and unpredictable environments.

\subsection{Parameter Study}
\label{sec:ablation}
We examine the effects of the inverse temperature $\alpha  $, the balance coefficient $\lambda$, and the noise scale $\sigma$. Due to the space limit, \textit{the results for $\sigma$ and on additional datasets} are deferred to \cref{appsec:ablation}.
A sensitivity analysis on dynamics model errors is also provided in \cref{appsec:checkpoint}.

\begin{figure}
    \centering
    \subfigure[Effects of the inverse temperature $\alpha$.]{
    \label{fig:temp}
    \includegraphics[width=0.48\textwidth]{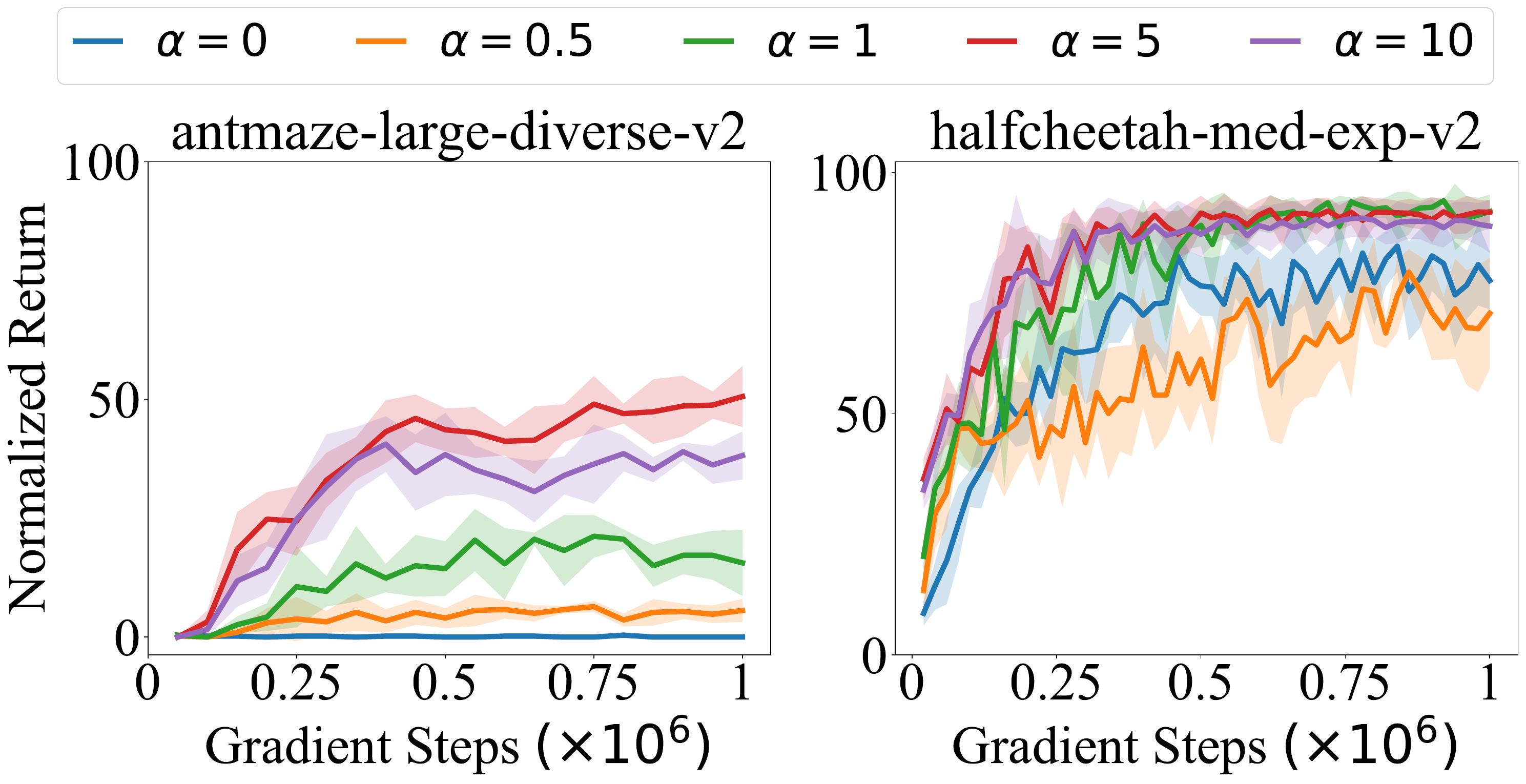}
    }\hspace{-2mm}
    \subfigure[Effects of the balance coefficient $\lambda$.]{
    \label{fig:lam}
    \includegraphics[width=0.48\textwidth]{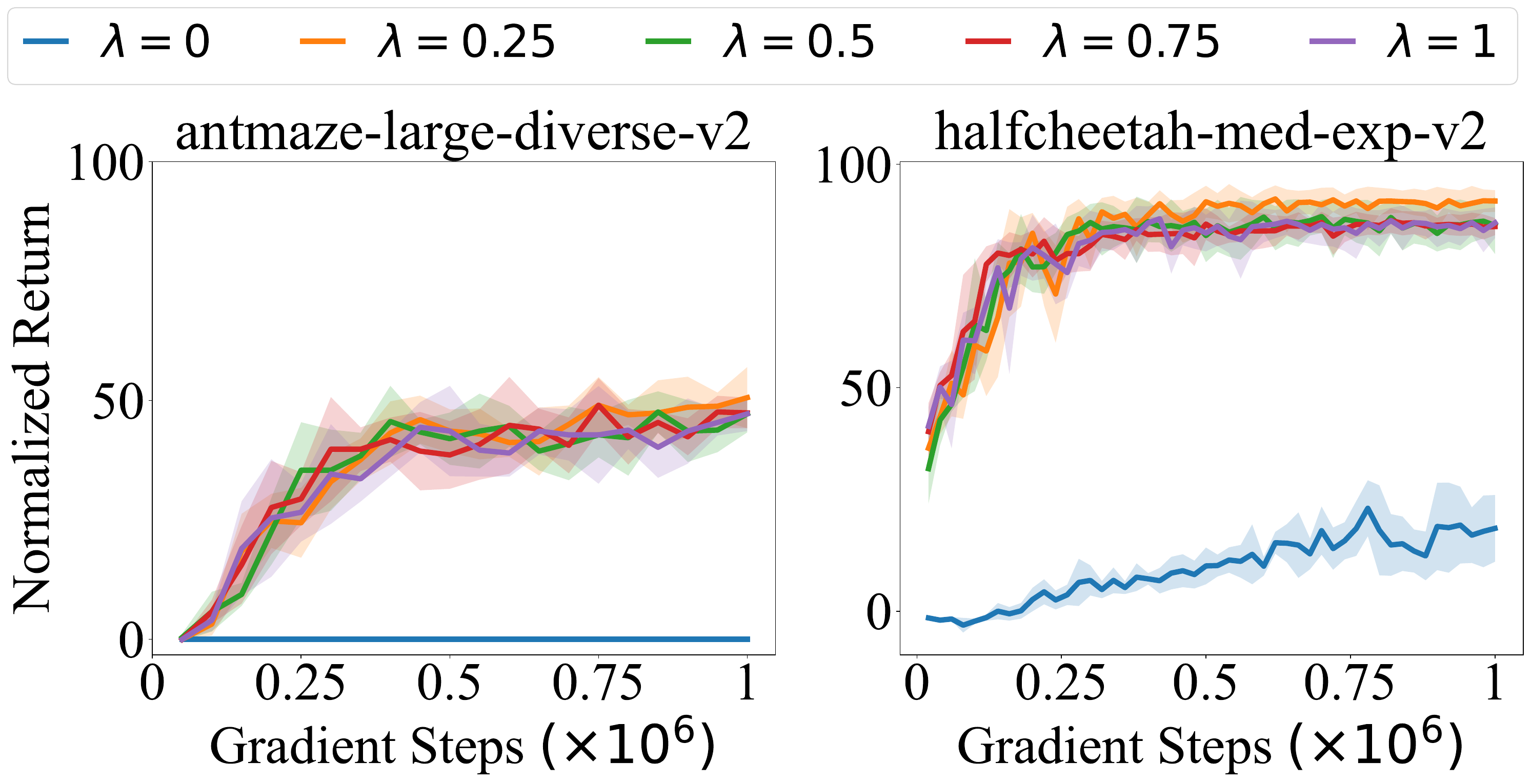}
    }\hspace{-2mm}
    \vspace{-1mm}
    \caption{Parameter study on the inverse temperature $\alpha$ and the balance coefficient $\lambda$. 
(a) An appropriately large $\alpha$ is crucial for achieving good performance.
(b) The proposed \algo regularization is essential and demonstrates robustness to changes in $\lambda$.}
    \label{fig:ablation}
    \vspace{-3mm}
\end{figure}

\textbf{Inverse temperature $\alpha  $.~~}
$\alpha  $ is the key hyperparameter in \algo for achieving value-aware OOD state correction. 
If $\alpha  =0$, the effect degenerates to vanilla OOD state correction. 
\cref{fig:temp} displays the learning curves of \algo with different $\alpha  $.
The results show that \textbf{a large $\alpha$ is \textit{crucial}} for achieving good performance~(also verified on more tasks in \cref{appfig:temp}), 
clearly demonstrating the effectiveness of our value-aware OOD state correction. 
However, too large $\alpha  $~($\alpha  =10$) induces less satisfying performance, probably due to the increased variance of the learning objective.

\textbf{Balance coefficient $\lambda$.~~}
$\lambda$ in Eq.~\eqref{eq:PI} controls the balance between vanilla policy improvement and \algo regularization. 
We vary $\lambda$ within the range $[0,1]$ and present the learning curves of \algo in \cref{fig:lam}.
Notably, \algo is able to converge to good performance over a very wide range of $\lambda$~(also verified on more tasks in \cref{appfig:lam}). 
An interesting finding is that even when $\lambda=1$ and the signal from RL improvement~(max Q) is removed, \algo still performs well on most tasks. This could be attributed to the fact that value-aware OOD state correction implies some sort of improvement in policy by maximizing the values of policy-induced next states.

\section{Conclusion and Limitations}
\label{sec:Conclusion and Limitations}
In this paper, we systematically analyze the OOD state issue in offline RL and propose \algo, a simple yet effective approach that unifies \textit{OOD state correction} and \textit{OOD action suppression}. \algo also achieves \textit{value-aware} OOD state correction, significantly improving performance over vanilla OOD state correction. Empirical results validate the properties of \algo, showcasing its superior performance on the offline RL benchmarks and its enhanced robustness in perturbed environments.

However, our work also has some limitations. For example, current \algo primarily focuses on continuous control tasks. In discrete settings, algorithmic components like state perturbation strategy would be different, which would be an interesting direction for future work. Moreover, we anticipate employing more advanced dynamics models, such as ensembles~\citep{yu2020mopo} and diffusion models~\citep{janner2022planning}, to further improve the performance of our method.

\section*{Acknowledgment}
We thank the anonymous reviewers for feedback on an early version of this paper.
This work was supported by the National Key R\&D Program of China under Grant 2018AAA0102801, National Natural Science Foundation of China under Grant 61827804.

% \small
\bibliographystyle{plainnat}
\bibliography{neurips_2024.bib}

%%%%%%%%%%%%%%%%%%%%%%%%%%%%%%%%%%%%%%%%%%%%%%%%%%%%%%%%%%%%
\clearpage

\appendix

\newpage

\section{Related Work}
\label{related works}
\paragraph{Model-free offline RL.} 
In offline RL, extrapolation error and overestimation caused by OOD actions pose significant challenges.
Among model-free solutions, value regularization methods penalize the $Q$-values of OOD actions~\citep{kumar2020conservative, an2021uncertainty,kostrikov2021offline,lyu2022mildly,bai2022pessimistic,yang2022rorl,mao2023supportedv}, while policy constraint approaches compel the trained policy to be close to the behavior policy, either explicitly via divergence penalties~\citep{wu2019behavior,kumar2019stabilizing,fujimoto2021minimalist}, implicitly by weighted behavior cloning~\citep{peng2019advantage,nair2020awac,wang2020critic,mao2023supportedt}, or directly through specific parameterization of the policy~\citep{fujimoto2019off,ghasemipour2021emaq}. Relatively independently, in-sample learning methods formulate the Bellman target using only the actions in the dataset to avoid OOD actions~\citep{brandfonbrener2021offline,kostrikov2022offline,zhang2023insample,xu2023offline}.
Recently, some works aim to learn the optimal policy within the support of the dataset (known as in-support or in-sample optimal policy) in a theoretically sound way and are less affected by the average quality of the dataset~\citep{mao2023supportedt,mao2023supportedv,wu2022supported}.
However, existing popular offline RL approaches primarily focus on the OOD action issue during training and often neglect the OOD state issue during the test phase. 

\paragraph{Model-based offline RL.}
Model-based RL methods learn a model of the environment and generate synthetic data from that model to optimize the policy~\citep{sutton1991dyna,janner2019trust,kaiser2019model}. To ensure conservatism in offline RL, \citet{kidambi2020morel} and \citet{yu2020mopo} estimate the uncertainty in the model and apply reward penalties for state-action pairs with high uncertainty. 
Some model-based approaches also introduce conservatism similarly to model-free ones, employing techniques like value regularization~\citep{yu2021combo} and policy constraint~\citep{matsushima2021deploymentefficient}.
Recently, \citet{sun2023model} conducts uncertainty quantification through the inconsistency of Bellman estimations under the learned dynamics ensemble.
However, model-based methods often come with a high computational burden~\citep{janner2019trust}, and their effectiveness relies heavily on the quality of the trained model~\citep{moerland2023model}. In contrast, our algorithm leverages the dynamics model to propagate policy gradients, make one-step predictions, and regularize policy training, leading to significantly improved computational efficiency and relatively high prediction accuracy.

\paragraph{OOD state correction.}
In offline RL, OOD state correction deserves more attention as the state deviation during the test phase can accumulate over time steps, severely degrading performance~\citep{levine2020offline}. 
Existing limited solutions aim to train the policy to correct the agent from OOD states to ID states~\citep{zhang2022state,jiang2023recovering}.
Specifically, SDC~\citep{zhang2022state} builds a dynamics model and a state transition model, and aligns the policy-induced next state distributions at OOD states with the state transition model.
On the other hand, OSR~\citep{jiang2023recovering} utilizes an inverse dynamics model to constrain the policy at OOD states.
Compared with prior methods, our proposed \algo efficiently unifies OOD state correction and OOD action suppression in offline RL and additionally achieves \textit{value-aware} OOD state correction.
The DICE series of works~\citep{nachum2019algaedice,lee2021optidice,mao2024odice} share similar motivations with SCAS to some extent; however, there are significant differences between the two. Firstly, DICE is based on a linear programming framework of RL, while SCAS is based on a dynamic programming framework. Therefore, the theoretical foundations and learning paradigms of the two are inherently different. Secondly, SCAS only corrects encountered OOD states, whereas DICE algorithms require the policy-induced occupancy distribution to align with the dataset distribution. Therefore, DICE's constraints are stricter, potentially making it more susceptible to the average quality of the dataset. Lastly, theoretical and empirical evidence indicate that DICE algorithms have a problem of gradient cancellation~\citep{mao2024odice}, which imposes certain limitations on their practical effectiveness.

\section{Proofs}\label{appsec:proofs}

In this section, we present the proofs for the theories in the paper.

\subsection{Derivation of the Value-aware State Transition Distribution}
We show that Eq.~\eqref{eq:N soltion} is the optimal solution of the optimization problem Eq.~\eqref{eq:optimize N}:
\begin{equation}
\max_{N^*} ~ \underset{s \sim \Dcal}{\E} \left[\alpha   \underset{s'\sim N^*(\cdot|s)}{\E} V(s') -  \mathrm{D_{KL}}(N^*(\cdot|s) \| N  (\cdot|s))\right]
\end{equation}
We can optimize $N^*$ at each $s \in \Dcal$ separately. Thus we consider the following optimization problem:
\begin{equation}
\begin{aligned}
         \max_{\tilde{N}} ~ & \alpha   \underset{s'\sim \tilde{N}(\cdot|s)}{\E} V(s') -  \mathrm{D_{KL}}(\tilde{N}(\cdot|s) \| N  (\cdot|s)) \\
       & s.t.~     \sum_{s'} \tilde{N}(s'|s)=1,~ \forall s\in \Dcal
\end{aligned}   
\end{equation}
This constrained optimization problem is convex, and the Lagrangian is:
\begin{equation}
\begin{aligned}
\mathcal{L}(\tilde{N}) = \alpha   \underset{s'\sim \tilde{N}(\cdot|s)}{\E} V(s') - \mathrm{D_{KL}}(\tilde{N}(\cdot|s) \| N  (\cdot|s))  +\nu\left(\sum_{s'} \tilde{N}(s'|s)-1\right)
\end{aligned}
\end{equation}
The KKT condition gives:
\begin{equation}
\frac{\partial \mathcal{L}}{\partial \tilde{N}(s'|s)}=\alpha  V(s') -  \log \tilde{N}(s'|s) -1 + \log N(s'|s) + \nu =0
\end{equation}
Solving for $\tilde{N}$ gives the closed form solution $N^*$:
\begin{equation}
     N^*(s'|s) = \exp\left(\alpha   V\left(s'\right) -1 +\nu \right) N  (s'|s),~\forall s\sim\Dcal
\end{equation}
By the condition $\sum_{s'} N^*(s'|s)=1$, we can directly solve the Lagrangian multiplier $\nu$ and replace $\exp(\nu-1)$ with a normalization factor:
\begin{equation}
N^*(s'|s) = \frac{1}{Z(s)}\exp\left(\alpha   V\left(s'\right)\right) N  (s'|s),~\forall s\sim\Dcal
\end{equation}
where $Z(s)=\sum_{s'} \exp\left(\alpha   V\left(s'\right)\right) N  (s'|s)$ is the normalization factor.

\subsection{Proof of \cref{thm:optimal deter dynamics}}

\begin{proposition}[\cref{thm:optimal deter dynamics} in the main paper]
\label{appthm:optimal deter dynamics}
Suppose that the environment dynamics is deterministic,
then both $\bar{R}(\pi)$ and $\bar{R}_1(\pi)$ achieve their global maximum at the policy $\pi^*$, where\footnote{Here for clarity, we use the notation $M  $ with slightly different meanings in different cases: in the stochastic setting, $M  : \Scal \times \Acal \to \Delta(\Scal)$; in the deterministic setting, $M  : \Scal \times \Acal \to \Scal$.}
\begin{equation}
\pi^*(a|s) = \frac{1}{Z(s)}\exp\left(\alpha   V\left(M  (s, a) \right)\right)  \beta(a|s)
\end{equation}
The support of $\pi^*$ is within that of the behavior policy $\beta$:
\begin{equation}
    \operatorname{supp}(\pi^*(\cdot|s)) \subseteq \operatorname{supp}(\beta(\cdot|s)), ~\forall s\sim \Dcal
\end{equation}
and $\pi^*$ makes the following equation hold:
\begin{equation}
N^*(\cdot|s) = M  (\cdot| s, \pi^*(\cdot|s)), ~\forall s\sim \Dcal
\end{equation}
\end{proposition}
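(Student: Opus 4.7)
The plan is to reduce both $\bar R(\pi)$ and $\bar R_1(\pi)$ to a statewise cross-entropy between $N^*(\cdot\mid s)$ and $M(\cdot\mid s,\pi(\cdot\mid s))$, and then exhibit $\pi^\ast$ as the policy that makes the induced next-state distribution equal $N^*$ pointwise. First I would rewrite the expectation $(s,s')\sim\Dcal$ as $s\sim d_\Dcal,\, s'\sim N(\cdot\mid s)$. Multiplying and dividing the integrand of $\bar R$ by $N(s'\mid s)$ and using the closed form $N^*(s'\mid s)=\exp(\alpha V(s'))N(s'\mid s)/Z(s)$ gives
\begin{equation*}
\bar R(\pi)=\E_{s\sim d_\Dcal}\sum_{s'}N^*(s'\mid s)\log M(s'\mid s,\pi(\cdot\mid s))=-\E_{s\sim d_\Dcal}\bigl[H(N^*(\cdot\mid s))+\mathrm{D_{KL}}(N^*(\cdot\mid s)\,\|\,M(\cdot\mid s,\pi(\cdot\mid s)))\bigr],
\end{equation*}
so $\bar R$ is maximized precisely when $M(\cdot\mid s,\pi(\cdot\mid s))=N^*(\cdot\mid s)$ for every $s\in\Dcal$. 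The same manipulation for $\bar R_1$ introduces only an extra positive, $\pi$-independent weight $Z(s)/\exp(\alpha V(s))$ per state, so its statewise maximizer is identical.

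Next I would verify that $\pi^\ast(a\mid s)\propto \exp(\alpha V(M(s,a)))\,\beta(a\mid s)$ indeed realizes this target. Because dynamics are deterministic, $M(s'\mid s,a)=\mathbbm{1}[M(s,a)=s']$, and the empirical state-transition marginal satisfies $N(s'\mid s)=\sum_{a}\beta(a\mid s)\mathbbm{1}[M(s,a)=s']$. Computing the induced distribution directly,
\begin{equation*}
M(s'\mid s,\pi^\ast(\cdot\mid s))=\sum_{a:M(s,a)=s'}\pi^\ast(a\mid s)=\frac{\exp(\alpha V(s'))}{Z(s)}\sum_{a:M(s,a)=s'}\beta(a\mid s)=\frac{\exp(\alpha V(s'))}{Z(s)}N(s'\mid s)=N^*(s'\mid s),
\end{equation*}
which is exactly the alignment required. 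Substituting $s'=M(s,a)$ also shows that the normalizer $Z(s)=\sum_{s'}\exp(\alpha V(s'))N(s'\mid s)$ coincides with $\sum_{a}\exp(\alpha V(M(s,a)))\beta(a\mid s)$, so $\pi^\ast$ is a valid probability distribution.

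Finally, the support containment is immediate: since $\exp(\alpha V(M(s,a)))>0$ everywhere, $\pi^\ast(a\mid s)>0$ if and only if $\beta(a\mid s)>0$, giving $\operatorname{supp}(\pi^\ast(\cdot\mid s))\subseteq\operatorname{supp}(\beta(\cdot\mid s))$. The only delicate point I anticipate is bookkeeping with the deterministic-transition ``distribution'' $M(s'\mid s,a)=\mathbbm{1}[M(s,a)=s']$ and the aggregation over preimages $\{a:M(s,a)=s'\}$; this is what makes the key identity $\sum_{a:M(s,a)=s'}\beta(a\mid s)=N(s'\mid s)$ work and is what lets the mixture over $\pi^\ast$ reconstruct $N^*$. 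Once that identification is in place, both claims (support containment and alignment with $N^*$) follow by direct substitution, and the global-maximum claim for $\bar R$ and $\bar R_1$ follows from the cross-entropy/KL reduction in the first step.
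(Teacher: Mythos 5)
Your proposal is correct and follows essentially the same route as the paper's proof: both reduce $\bar R$ and $\bar R_1$ to a statewise KL divergence between $N^*(\cdot\mid s)$ and $M(\cdot\mid s,\pi(\cdot\mid s))$ (your explicit cross-entropy split is just a rephrasing of the paper's argmax-to-argmin reduction, and the positive $\pi$-independent weight $Z(s)/\exp(\alpha V(s))$ handles $\bar R_1$ identically), and both then verify by direct substitution, using $M(s'\mid s,a)=\mathbbm{1}[M(s,a)=s']$ and the preimage identity $\sum_{a:M(s,a)=s'}\beta(a\mid s)=N(s'\mid s)$, that $\pi^*$ is normalized, realizes $N^*$, and inherits $\beta$'s support. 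No gaps.
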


\begin{proof}
We start with $\bar{R}(\pi)$.
\begin{align}
&\argmax_\pi ~ \bar{R}(\pi)\\
=&\argmax_\pi \underset{(s,s') \sim \Dcal}{\E}~\left[\frac{1}{Z(s)}\exp\left(\alpha   V\left(s'\right)\right) \log M  (s'|s, \pi(\cdot|s))\right]\\
=&\argmax_\pi \underset{s \sim \Dcal}{\E}~\underset{s' \sim N  (\cdot|s)}{\E}~\left[\frac{1}{Z(s)}\exp\left(\alpha   V\left(s'\right)\right) \log M  (s'|s, \pi(\cdot|s))\right]\\
=&\argmax_\pi \underset{s \sim \Dcal}{\E}~\underset{s' \sim N^*(\cdot|s)}{\E}~\left[\log M  (s'|s, \pi(\cdot|s))\right]\\
=&\argmin_\pi \underset{s \sim \Dcal}{\E}~\underset{s' \sim N^*(\cdot|s)}{\E}~\left[\log N^*(s'|s) - \log M  (s'|s, \pi(\cdot|s))\right]\\
\label{appeq:reg KL}
=&\argmin_\pi \underset{s \sim \Dcal}{\E}~ \mathrm{D_{KL}}(N^*(\cdot|s) \| M  (\cdot|s, \pi(\cdot|s)))
\end{align}
The third equality holds because of the relationship between $N^*$ and $N  $ in Eq.~\eqref{eq:N soltion}:
\begin{equation}
\label{appeq:N soltion}
N^*(s'|s) = \frac{1}{Z(s)}\exp\left(\alpha   V\left(s'\right)\right) N  (s'|s),~\forall s\sim\Dcal
\end{equation}
Therefore, the maximizer of $\bar{R}(\pi)$ is equal to the solution of the minimization problem in Eq.~\eqref{appeq:reg KL}. Now consider the two distributions $N^*(\cdot|s)$ and $M  (\cdot|s, \pi(\cdot|s))$ in Eq.~\eqref{appeq:reg KL}.
\begin{align}
N^*(s'|s) &= \frac{1}{Z(s)}\exp\left(\alpha   V\left(s'\right)\right) N  (s'|s)\\
&= \frac{1}{Z(s)}\exp\left(\alpha   V\left(s'\right)\right) \sum_a \beta(a|s) M  (s'|s,a)
\end{align}
For analytical clarity, we use the notation $M  $ with slightly different meanings in different cases: in the stochastic setting, $M  : \Scal \times \Acal \to \Delta(\Scal)$; in the deterministic setting, $M  : \Scal \times \Acal \to \Scal$. With the deterministic dynamics assumption,
\begin{align}
N^*(s'|s) &= \frac{1}{Z(s)}\exp\left(\alpha   V\left(s'\right)\right) \sum_a \beta(a|s) \mathbb{I}\left[M  (s,a)=s'\right]\\
&= \sum_a \frac{1}{Z(s)}\exp\left(\alpha   V\left(s'\right)\right) \beta(a|s) \mathbb{I}\left[M  (s,a)=s'\right]\\
\label{appeq:N*}
&= \sum_a \frac{1}{Z(s)}\exp\left(\alpha   V\left(M  (s,a)\right)\right) \beta(a|s) \mathbb{I}\left[M  (s,a)=s'\right]
\end{align}
On the other hand,
\begin{align}
M  (s'|s, \pi(\cdot|s)) &= \sum_a M  (s'|s, a) \pi(a|s)\\
\label{appeq:MD1}
&= \sum_a \pi(a|s) \mathbb{I}\left[M  (s,a)=s'\right]
\end{align}
Now we define $\pi^*(a|s)$ as
\begin{equation}
\label{appeq:pi*}
\pi^*(a|s) := \frac{1}{Z(s)}\exp\left(\alpha   V\left(M  (s,a)\right)\right) \beta(a|s)
\end{equation}
We first show that $\pi^*$ is a valid policy, that is, $\pi^*$ is normalized.
\begin{align}
\pi^*(a|s) &= \frac{1}{Z(s)}\exp\left(\alpha   V\left(M  (s,a)\right)\right) \beta(a|s)\\
&= \frac{\exp\left(\alpha   V\left(M  (s,a)\right)\right) \beta(a|s)}{\sum_{s'} \exp\left(\alpha   V\left(s'\right)\right) N  (s'|s)}\\
&= \frac{\exp\left(\alpha   V\left(M  (s,a)\right)\right) \beta(a|s)}{\sum_{s'} \exp\left(\alpha   V\left(s'\right)\right) \sum_a \beta(a|s) M  (s'|s,a)}\\
&= \frac{\exp\left(\alpha   V\left(M  (s,a)\right)\right) \beta(a|s)}{\sum_a \sum_{s'} \exp\left(\alpha   V\left(s'\right)\right) \beta(a|s) M  (s'|s,a)}\\
&= \frac{\exp\left(\alpha   V\left(M  (s,a)\right)\right) \beta(a|s)}{\sum_a \exp\left(\alpha   V\left(M  (s,a)\right)\right) \beta(a|s)}
\end{align}
Therefore, $\sum_a \pi^*(a|s) = 1$.

Substitute Eq.~\eqref{appeq:pi*} into Eq.~\eqref{appeq:N*},
\begin{equation}
\label{appeq:N*1}
N^*(s'|s) = \sum_a \pi^*(a|s) \mathbb{I}\left[M  (s,a)=s'\right]
\end{equation}
Comparing Eq.~\eqref{appeq:MD1} with Eq.~\eqref{appeq:N*1}, it holds that $N^*(s'|s) = M  (s'|s, \pi^*(\cdot|s)), \forall s\sim\Dcal$. As a result, 
\begin{equation}
\underset{s \sim \Dcal}{\E}~ \mathrm{D_{KL}}(N^*(\cdot|s) \| M  (\cdot|s, \pi^*(\cdot|s))) = 0
\end{equation}
Considering the non-negativity of KL divergence, the optimization problem in Eq.~\eqref{appeq:reg KL} achieves its global minimum at $\pi^*$. Therefore, $\bar{R}(\pi)$ also achieves its global maximum at $\pi^*$.

Now we consider $\bar{R}_1(\pi)$. 
\begin{align}
&\argmax_\pi ~ \bar{R}_1(\pi)\\
=&\argmax_\pi \underset{(s,s') \sim \Dcal}{\E}~\left[\exp\left(\alpha   \left(V\left(s'\right)- V\left(s\right)\right)\right) \log M  (s'|s, \pi(\cdot|s))\right]\\
=&\argmax_\pi \underset{(s,s') \sim \Dcal}{\E}~\left[\frac{Z(s)}{\exp\left(\alpha   V\left(s\right)\right) Z(s)}\exp\left(\alpha   V\left(s'\right)\right) \log M  (s'|s, \pi(\cdot|s))\right]\\
=&\argmax_\pi \underset{s \sim \Dcal}{\E}~\underset{s' \sim N  (\cdot|s)}{\E}~\left[\frac{Z(s)}{\exp\left(\alpha   V\left(s\right)\right) Z(s)}\exp\left(\alpha   V\left(s'\right)\right) \log M  (s'|s, \pi(\cdot|s))\right]\\
=&\argmax_\pi \underset{s \sim \Dcal}{\E}~\underset{s' \sim N^*(\cdot|s)}{\E}~\left[\frac{Z(s)}{\exp\left(\alpha   V\left(s\right)\right)}\log M  (s'|s, \pi(\cdot|s))\right]\\
=&\argmin_\pi \underset{s \sim \Dcal}{\E}~\underset{s' \sim N^*(\cdot|s)}{\E}~\left[\frac{Z(s)}{\exp\left(\alpha   V\left(s\right)\right)} \left(\log N^*(s'|s) - \log M  (s'|s, \pi(\cdot|s))\right)\right]\\
\label{appeq:reg KL R1}
=&\argmin_\pi \underset{s \sim \Dcal}{\E}~ \left[\frac{Z(s)}{\exp\left(\alpha   V\left(s\right)\right)} \mathrm{D_{KL}}(N^*(\cdot|s) \| M  (\cdot|s, \pi(\cdot|s)))\right]
\end{align}
The fourth equality holds because of the relationship between $N^*$ and $N  $ in Eq.~\eqref{appeq:N soltion}.

As shown above, it holds that $N^*(s'|s) = M  (s'|s, \pi^*(\cdot|s)), \forall s\sim\Dcal$. As a result, 
\begin{equation}
\underset{s \sim \Dcal}{\E}~ \left[\frac{Z(s)}{\exp\left(\alpha   V\left(s\right)\right)} \mathrm{D_{KL}}(N^*(\cdot|s) \| M  (\cdot|s, \pi(\cdot|s)))\right]=0
\end{equation}
Considering $Z(s)/{\exp\left(\alpha   V\left(s\right)\right)}>0$ and the non-negativity of KL divergence, the optimization problem in Eq.~\eqref{appeq:reg KL R1} achieves its global minimum at $\pi^*$. Therefore, $\bar{R}_1(\pi)$ also achieves its global maximum at $\pi^*$.

In conclusion, when the environment dynamics is deterministic, both $\bar{R}(\pi)$ and $\bar{R}_1(\pi)$ achieve their global maximum at the policy $\pi^*$, and $\pi^*$ makes the following equation hold:
\begin{equation}
N^*(\cdot|s) = M  (\cdot| s, \pi^*(\cdot|s)), ~\forall s\sim \Dcal
\end{equation}
Moreover, because $\pi^*(a|s) = \frac{1}{Z(s)}\exp\left(\alpha   V\left(M  (s, a) \right)\right)  \beta(a|s)$, the support of $\pi^*$ is included by that of the behavior policy $\beta$:
\begin{equation}
    \operatorname{supp}(\pi^*(\cdot|s)) \subseteq \operatorname{supp}(\beta(\cdot|s)), ~\forall s\sim \Dcal
\end{equation}
\end{proof}

\subsection{Proof of \cref{thm:optimal stoc dynamics}}
\begin{proposition}[\cref{thm:optimal stoc dynamics} in the main paper]
\label{appthm:optimal stoc dynamics}
When the dynamics is stochastic, the maximizers of both $\bar{R}(\pi)$ and $\bar{R}_1(\pi)$ are constrained within the support of the behavior policy:
\begin{align}
    \operatorname{supp}(\pi^*(\cdot|s)) &\subseteq \operatorname{supp}(\beta(\cdot|s)),~\forall s\sim \Dcal\\
    \operatorname{supp}(\pi_1^*(\cdot|s)) &\subseteq \operatorname{supp}(\beta(\cdot|s)), ~\forall s\sim \Dcal
\end{align}
\end{proposition}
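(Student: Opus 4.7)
The plan is to reformulate each regularizer as a weighted expected log-likelihood and then rule out OOD support in any maximizer by a rescaling argument, mirroring the structure of the proof of Proposition 1 but without relying on the deterministic dynamics to drive the KL to zero.

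Starting from the algebraic identity used in the proof of Proposition 1, I would rewrite
\begin{align*}
\bar R(\pi) &= \underset{s\sim\Dcal}{\E}\,\underset{s'\sim N^*(\cdot|s)}{\E}\log M(s'|s,\pi(\cdot|s)),\\
\bar R_1(\pi) &= \underset{s\sim\Dcal}{\E}\!\left[\tfrac{Z(s)}{\exp(\alpha V(s))}\underset{s'\sim N^*(\cdot|s)}{\E}\log M(s'|s,\pi(\cdot|s))\right],
\end{align*}
using $(s,s')\sim\Dcal\Leftrightarrow s\sim d_{\Dcal},s'\sim N(\cdot|s)$ together with $N^*(s'|s)=\exp(\alpha V(s'))N(s'|s)/Z(s)$. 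Since the weight $Z(s)/\exp(\alpha V(s))$ is strictly positive and $\pi$-independent, maximizing either objective amounts to making $M(\cdot|s,\pi(\cdot|s))$ assign large probability to every $s'\in\operatorname{supp}(N^*(\cdot|s))\subseteq\operatorname{supp}(N(\cdot|s))$.

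The key observation is that $M$ is the empirical dynamics of $\Dcal$: for any action $a\notin\operatorname{supp}(\beta(\cdot|s))$, no transition from $(s,a)$ ever appears in the dataset, so $M(s'|s,a)\equiv 0$. Consequently, for any candidate policy $\pi$,
\[
M(s'|s,\pi(\cdot|s))=\sum_{a\in\operatorname{supp}(\beta(\cdot|s))}\pi(a|s)\,M(s'|s,a),
\]
and summing over $s'$ yields $\sum_{s'}M(s'|s,\pi(\cdot|s))=1-c_s$, where $c_s:=\sum_{a\notin\operatorname{supp}(\beta(\cdot|s))}\pi(a|s)$ is the total mass that $\pi$ "wastes" on OOD actions. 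Now I would argue by contradiction. Suppose some maximizer $\pi^\star$ of $\bar R$ satisfies $c_s>0$ on a set of states of positive $d_\Dcal$-measure. Define the rescaled policy $\tilde\pi(a|s):=\pi^\star(a|s)/(1-c_s)$ for $a\in\operatorname{supp}(\beta(\cdot|s))$, $\tilde\pi(a|s):=0$ otherwise, and $\tilde\pi:=\pi^\star$ on the remaining states. By linearity, $M(\cdot|s,\tilde\pi(\cdot|s))=(1-c_s)^{-1}M(\cdot|s,\pi^\star(\cdot|s))$, so at every $s'$ where the latter is positive,
\[
\log M(s'|s,\tilde\pi(\cdot|s))-\log M(s'|s,\pi^\star(\cdot|s))=-\log(1-c_s)>0.
\]
Integrating this strict pointwise improvement against $N^*(\cdot|s)$ and then against $d_\Dcal$ gives $\bar R(\tilde\pi)>\bar R(\pi^\star)$, contradicting optimality of $\pi^\star$. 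The same construction, with the extra positive factor $Z(s)/\exp(\alpha V(s))$ absorbed into the outer expectation, handles $\bar R_1$ and yields $\pi_1^\star$.

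The main obstacle I anticipate is the boundary case in which $M(\cdot|s,\pi^\star(\cdot|s))$ fails to cover $\operatorname{supp}(N^*(\cdot|s))$ at some relevant $s$, which would make $\bar R(\pi^\star)=-\infty$ and block the "$\log(1-c_s)$ is a finite shift" step of the rescaling argument. I would dispose of this by exhibiting a feasible policy with finite objective—most directly $\beta$ itself, for which $M(\cdot|s,\beta(\cdot|s))=N(\cdot|s)\supseteq\operatorname{supp}(N^*(\cdot|s))$ holds by definition—so any genuine maximizer must already satisfy the support-covering property on which the rescaling estimate relies, allowing the contradiction to go through cleanly for both $\bar R$ and $\bar R_1$.
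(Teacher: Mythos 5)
Your argument is correct and essentially the paper's own: both proofs hinge on the fact that the empirical dynamics $M(s'|s,a)$ vanishes for every $a\notin\operatorname{supp}(\beta(\cdot|s))$, so reallocating any mass a policy puts on OOD actions onto in-support actions can only increase, and at some dataset transition strictly increases, the (positively weighted) log-likelihood objective, which rules out OOD support in any maximizer of $\bar R$ or $\bar R_1$. The only cosmetic differences are that you renormalize $\pi^\star$ proportionally onto $\operatorname{supp}(\beta(\cdot|s))$ (gaining a uniform $-\log(1-c_s)$ per covered $s'$) and argue by contradiction with an explicit finiteness check against $\beta$, whereas the paper adds the OOD mass $\epsilon(s)$ uniformly across the $n(s)$ in-support actions and shows a weak improvement that is strict whenever $\epsilon(s)>0$.
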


\begin{proof}
We start with $\bar{R}(\pi)$.
\begin{align}
\bar{R}(\pi):&= \underset{(s,s') \sim \Dcal}{\E}~\left[\frac{1}{Z(s)}\exp\left(\alpha   V\left(s'\right)\right) \log M  (s'|s, \pi(\cdot|s))\right]\\
&=\underset{(s,s') \sim \Dcal}{\E}~\left[\frac{1}{Z(s)}\exp\left(\alpha   V\left(s'\right)\right) \log \left(\sum_a M  (s'|s, a) \pi(a|s)\right)\right]
\end{align}
Let $\pi$ denote any valid policy. For $\forall s \in \Dcal$, define $\epsilon(s)$ and $n(s)$ as follows:
\begin{align}
\label{appeq:epsilon s}
\epsilon(s)&:=\sum_a \mathbb{I}[\beta(a|s)=0]\pi(a|s)\\
\label{appeq:n s}
n(s)&:=\sum_a \mathbb{I}[\beta(a|s)>0]
\end{align}
For $\forall s \in \Dcal$, there exists at least one action $a$ such that $(s,a) \in \Dcal$. Thus it holds that $n(s)>0,\forall s \in \Dcal$.
Then for $\forall s \in \Dcal, \forall \pi$, define $\pi_\text{in}$ as follows:
\begin{equation}
\label{appeq:pi^*'}
    \pi_\text{in}(a|s) = \left\{  
         \begin{array}{lr}  
         \pi(a|s) + \frac{\epsilon(s)}{n(s)},  & \beta(a|s)>0,  \\  
         0, & \beta(a|s)=0. \\
         \end{array}
\right.  
\end{equation}
$\pi_\text{in}$ can be seen as a projection of $\pi$ onto $\beta$'s support. Besides, for $\forall s \in \Dcal$,
\begin{align}
\sum_a \pi_\text{in}(a|s) &= \sum_a \mathbb{I}[\beta(a|s)>0] \left( \pi(a|s) + \frac{\epsilon(s)}{n(s)} \right)\\
&= \sum_a \mathbb{I}[\beta(a|s)>0] \pi(a|s) + \epsilon(s)\\
&= \sum_a \mathbb{I}[\beta(a|s)>0] \pi(a|s) + \sum_a \mathbb{I}[\beta(a|s)=0]\pi(a|s)\\
&= \sum_a \pi(a|s)\\
&= 1
\end{align}
Thus $\pi_\text{in}$ is a valid policy.

Now we compare $\bar{R}(\pi_\text{in})$ with $\bar{R}(\pi)$. For $\forall (s,s') \in \Dcal$,
\begin{align}
&\sum_a M  (s'|s, a) \pi_\text{in}(a|s) - \sum_a M  (s'|s, a) \pi(a|s)\\
= &\sum_a M  (s'|s, a) \left(\pi_\text{in}(a|s) - \pi(a|s) \right)\\
= &\sum_{\{a|\beta(a|s)>0\}} M  (s'|s, a) \left(\pi_\text{in}(a|s) - \pi(a|s) \right)\\
= &\sum_{\{a|\beta(a|s)>0\}} M  (s'|s, a) \frac{\epsilon(s)}{n(s)}\\
\geq & \quad 0
\end{align}
The second equality holds because, in tabular MDPs, the empirical dynamics model $M  $ exactly computes the conditional distribution observed in the dataset.
For transitions not contained in the dataset, $M  =0$~\citep{fujimoto2019off}.
The final inequality holds because $\epsilon(s) \geq 0$.

Therefore, 
\begin{align}
&\bar{R}(\pi_\text{in}) - \bar{R}(\pi)\\
=& \underset{(s,s') \sim \Dcal}{\E}~\left[\frac{1}{Z(s)}\exp\left(\alpha   V\left(s'\right)\right) \log \left(\frac{\sum_a M  (s'|s, a) \pi_\text{in}(a|s)}{\sum_a M  (s'|s, a) \pi(a|s)} \right)\right]\\
\geq& \underset{(s,s') \sim \Dcal}{\E}~\left[\frac{1}{Z(s)}\exp\left(\alpha   V\left(s'\right)\right) \log \left(1 \right)\right]\\
\geq& \quad 0
\end{align}
Now suppose $\pi$ is not constrained within the support of the behavior policy at some state $s_1 \in \Dcal$: $\operatorname{supp}(\pi(\cdot|s_1)) \nsubseteq \operatorname{supp}(\beta(\cdot|s_1))$. That is, $\exists \tilde{a}_1$ such that $\pi(\tilde{a}_1|s_1)>0$ and $\beta(\tilde{a}_1|s_1)=0$. Thus it holds that $\epsilon(s_1)=\sum_a \mathbb{I}[\beta(a|s_1)=0]\pi(a|s_1)>0$.
On the other hand, since $s_1 \in \Dcal$, there exists at least one action $a_1$ and one state $s_1'$ such that $(s_1,a_1,s_1') \in \Dcal$. Thus it holds that $\beta(a_1|s_1)>0$ and $M  (s_1'|s_1, a_1)>0$. As a result, 
\begin{align}
&\sum_a M  (s_1'|s_1, a) \pi_\text{in}(a|s_1) - \sum_a M  (s_1'|s_1, a) \pi(a|s_1)\\
= &\sum_{\{a|\beta(a|s_1)>0\}} M  (s_1'|s_1, a) \frac{\epsilon(s_1)}{n(s_1)}\\
> & \quad 0
\end{align}
In such case, $\bar{R}(\pi_\text{in}) > \bar{R}(\pi)$.
Therefore, if $\pi$ is not constrained within the support of the behavior policy at some state $s_1 \in \Dcal$, we can find another policy $\pi_\text{in}$ that is constrained within the support of the behavior policy and achieves higher objective function $\bar{R}(\pi_\text{in})$. 
Consequently, $\bar{R}(\pi)$ must achieve its maximum at support constrained policy $\pi^*$: $\operatorname{supp}(\pi^*(\cdot|s)) \subseteq \operatorname{supp}(\beta(\cdot|s)),~\forall s\sim \Dcal$.

Now we consider $\bar{R}_1(\pi)$. 
\begin{align}
\bar{R}_1(\pi):&= \underset{(s,s') \sim \Dcal}{\E}~\left[\exp\left(\alpha   \left(V\left(s'\right)- V\left(s\right)\right)\right) \log M  (s'|s, \pi(\cdot|s))\right]\\
&=\underset{(s,s') \sim \Dcal}{\E}~\left[\exp\left(\alpha   \left(V\left(s'\right)- V\left(s\right)\right)\right) \log \left(\sum_a M  (s'|s, a) \pi(a|s)\right)\right]
\end{align}

With the same definition of $\epsilon(s)$, $n(s)$ and $\pi_\text{in}$ as in Eq.~\eqref{appeq:epsilon s}, Eq.~\eqref{appeq:n s} and Eq.~\eqref{appeq:pi^*'}, it also holds that 
\begin{align}
&\bar{R}_1(\pi_\text{in}) - \bar{R}_1(\pi)\\
=& \underset{(s,s') \sim \Dcal}{\E}~\left[\exp\left(\alpha   \left(V\left(s'\right)- V\left(s\right)\right)\right) \log \left(\frac{\sum_a M  (s'|s, a) \pi_\text{in}(a|s)}{\sum_a M  (s'|s, a) \pi(a|s)} \right)\right]\\
\geq& \underset{(s,s') \sim \Dcal}{\E}~\left[\exp\left(\alpha   \left(V\left(s'\right)- V\left(s\right)\right)\right) \log \left(1 \right)\right]\\
\geq& \quad 0
\end{align}
As before, when supposing $\pi$ is not constrained within the support of the behavior policy at some state $s_1 \in \Dcal$, it holds that $\bar{R}_1(\pi_\text{in}) > \bar{R}_1(\pi)$. Therefore, $\bar{R}_1(\pi)$ must achieve its maximum at support constrained policy $\pi_1^*$: $\operatorname{supp}(\pi_1^*(\cdot|s)) \subseteq \operatorname{supp}(\beta(\cdot|s)),~\forall s\sim \Dcal$.

In conclusion, when the environment dynamics is stochastic, the maximizers of both $\bar{R}(\pi)$ and $\bar{R}_1(\pi)$ are constrained within the support of the behavior policy:
\begin{equation}
    \operatorname{supp}(\pi^*(\cdot|s)) \subseteq \operatorname{supp}(\beta(\cdot|s)),~ \operatorname{supp}(\pi_1^*(\cdot|s)) \subseteq \operatorname{supp}(\beta(\cdot|s)),~\forall s\sim \Dcal
\end{equation}
\end{proof}

\section{Further Discussions}

\subsection{Rationale for Choosing $\exp(\alpha V(s))$ as the Empirical Normalizer}
\label{appsec:normalizer}
Firstly, choosing $\exp(\alpha V(s))$ is intended to obtain something similar to the advantage function. With this normalizer, the weight of our regularizer is $\exp(\alpha (V(s') - V(s)))$, which is comparable to the weight $\exp(\alpha A(s,a))$ in Advantage Weighted Regression (AWR)~\citep{peng2019advantage}. Here, $V(s') - V(s)$ represents the relative advantage of the next state $s'$ compared to the current state $s$, while $A(s,a)$ reflects the relative advantage of taking action $a$ in $s$ compared to following the current policy. Comparison of the objectives of SCAS and AWR:
\begin{equation}
\text{SCAS:}\quad \exp(\alpha (V(s')-V(s))) \log(M(s'|\hat s,\pi(\hat s)))
\end{equation}
\begin{equation}
\text{AWR:}\quad \exp(\alpha A(s,a)) \log\pi(a|s)
\end{equation}

Secondly, as discussed in the paper, introducing any normalizer that depends only on $s$ (independent of $s'$) does not affect the development and analysis of our method; it is merely for computational stability. In AWR-based methods, there also exists a normalizer $Z(s)$ and they usually disregard it~\citep{peng2019advantage,nair2020awac}. The rationale behind this is similar.

\subsection{Pessimism and Robustness in SCAS}
In a specific sense, SCAS, which unifies OOD state correction and OOD action suppression, also integrates pessimism and state robustness. (1) Regarding pessimism: The OOD action suppression effect of SCAS aligns with the pessimism commonly discussed in offline RL work (being pessimistic about OOD actions)~\citep{kumar2020conservative,xie2021bellman,kumar2019stabilizing,bai2022pessimistic,shao2023counterfactual}. Unlike traditional policy constraint methods~\citep{wu2019behavior,kumar2019stabilizing,fujimoto2021minimalist,peng2019advantage}, our approach does not require the training policy to align with the behavior policy; it only requires the successor states to be within the dataset support, which is a more relaxed constraint. (2) Regarding state robustness: The OOD state correction effect of SCAS is aimed at improving the agent's robustness to OOD states during the test phase. Compared with previous works, SCAS unifies OOD state correction and OOD action suppression and additionally achieves value-aware OOD state correction. Some offline RL literature on state robustness differs from our approach; they typically consider noisy observations~\citep{yang2022rorl}, such as sensor errors. In contrast, SCAS addresses state robustness concerning actual OOD states encountered during test time, rather than noisy observations.

\subsection{Regularization Effect at ID States}
In SCAS, there is regularization on the policy's output actions at ID states. In our regularizer, the perturbed states $\hat{s}$ are sampled from $\mathcal{N}(s,\sigma^2)$, and a large portion of $\hat{s}$ will fall near the original ID state $s$ or even be approximately equal to $s$. Therefore, the policy's output actions at ID states are also regularized. For this part of the regularization, its role is equivalent to the ID state regularizer analyzed in \cref{sec:OOD Action Suppression}, which has been theoretically shown to have the effect of suppressing OOD actions. Moreover, the experimental results in \cref{sec:experiments} also demonstrate that our OOD state correction regularizer addresses the traditional issue with OOD actions.

\subsection{Differences between the OOD Action Issue and the OOD State Issue}

We further elucidate the differences between the well-known OOD action issue and the OOD state issue we analyzed. Most offline RL works focus on the OOD action issue in the training phase. That is, the trained policy outputs OOD actions to compute the target Q, which results in extrapolation error and value divergence during training~\citep{fujimoto2019off}. In contrast, the OOD state issue we defined and analyzed is in the test phase. That is, the agent can enter states out of the offline dataset during test, potentially resulting in catastrophic failure.

\section{Experimental Details}
\label{appsec:exp_details}

\begin{table}[htbp]
\caption{Hyperparameters in \algo.}

\label{apptab:hyper}
% \vskip 0.15in
\begin{center}
\begin{tabular}{cll}
\toprule
                              & Hyperparameter          & \multicolumn{1}{l}{Value}           \\ \midrule
\multirow{12}{*}{Policy training}         & Optimizer               & \multicolumn{1}{l}{Adam ~\citep{kingma2014adam}}            \\
                              & Critic learning rate    & \multicolumn{1}{l}{$3\times 10^{-4}$}            \\
                              & Actor learning rate     & \multicolumn{1}{l}{$2\times 10^{-4}$ with cosine schedule}  \\
                              & Batch size              & 256                                 \\
                              & Discount factor                & 0.99                                \\
                              & Gradient Steps    & $10^6$                             \\
                              & Target network update rate      & 0.005                               \\
                              & Policy update frequency & 2                                   \\
                              & Number of Critics & 4                                   \\
                              & Inverse temperature $\alpha  $   & 5 \\
                              & Balance coefficient $\lambda$  & 0.25              \\
                              & Noise scale $\sigma$  & 0.003              \\
                              \midrule
\multirow{4}{*}{Dynamics training}         & Optimizer               & \multicolumn{1}{l}{Adam}            \\
                              & Learning rate    & \multicolumn{1}{l}{$1\times 10^{-3}$}            \\
                              & Batch size              & 256                                 \\
                              & Gradient Steps    & $5 \times 10^5$                             \\
                              \midrule
\multirow{3}{*}{Architecture} & Actor    & input-256-256-output                                 \\
                              & Critic & input-256-256-1                                      \\
                              & Dynamics & input-256-256-256-256-output                                      \\
                              \bottomrule
\end{tabular}
\end{center}
\end{table}

All hyperparameters of \algo are included in \cref{apptab:hyper}. 
Note that we use this same set of hyperparameters to obtain all the results reported in this paper~(except for parameter study). Following TD3+BC~\cite{fujimoto2021minimalist}, we normalize the states in all datasets except for antmaze-large. We clip the exponentiated weight $\exp\left(\alpha   V_\theta\left(s'\right)-\alpha   V_\theta\left(s\right)\right)$ in Eq.~\eqref{eq:reg mse} to $(-\infty, 50]$. Following the suggestions in the benchmark~\cite{fu2020d4rl}, we subtract 1 from the rewards for the Antmaze datasets.

Our evaluation criteria follow those used in most previous works. For the Gym locomotion tasks, we average returns over 10 evaluation trajectories and 5 random seeds, while for the Ant Maze tasks, we average over 100 evaluation trajectories and 5 random seeds. The reported results are the normalized scores, which are offered by the D4RL benchmark~\cite{fu2020d4rl} to measure how the learned policy compared with random and expert policy:
\begin{equation*}
\text{D4RL score} = 100 \times \frac{\text{learned policy return}-\text{random policy return}}{\text{expert policy return}-\text{random policy return}}
\end{equation*}

The results of baselines reported in \cref{tab:baselines} are obtained as follows. We re-run OSR~\citep{jiang2023recovering} on all datasets using their official codebase\footnote{\url{https://github.com/Jack10843/OSR}} and tune the hyperparameters for each dataset as specified in their paper. We implement SDC~\citep{zhang2022state} and re-run it on all datasets. We use the SDC-related hyperparameters as specified in their paper, and sweep the CQL-related hyperparameters in \{1,2,5,10,20\} for each dataset. We re-run OneStep RL~\citep{brandfonbrener2021offline} on all datasets using their official codebase\footnote{\url{https://github.com/davidbrandfonbrener/onestep-rl}} and the default hyperparameters. We implement BC~\citep{pomerleau1988alvinn} based on the TD3+BC repository\footnote{\url{https://github.com/sfujim/TD3_BC}} and re-run it on all datasets. The results of other baselines are taken from \citep{bai2022pessimistic} and \citep{wu2022supported}.
The runtime in \cref{tab:baselines} is obtained by running offline RL algorithms on halfcheetah-medium-replay-v2 on a GeForce RTX 3090.

\cref{fig:tsne_4fig_cql,fig:tsne_4fig_td3bc,fig:tsne_4fig_scas,fig:tsne_4fig_value} share the same embedding function obtained by running t-SNE on the set of all 200,000 samples (50,000 samples each from the dataset, CQL, TD3+BC, and SCAS). This ensures a clear visual comparison. \cref{fig:tsne_4fig_value} contains all the 200,000 samples, which is the union of the points in \cref{fig:tsne_4fig_cql,fig:tsne_4fig_td3bc,fig:tsne_4fig_scas}.

\section{Additional Experimental Results}
\label{appsec:Additional Experimental Results}

\subsection{Additional Value Estimation Results}
\label{appsec:Additional Value Estimation Results}

Under the same setting of \cref{fig:Q value}, we conduct experiments on the additional datasets. The results are shown in \cref{appfig:Q value}. We omit the Q values of Off-policy RL, SDC w/o CQL, and OSR w/o CQL at higher numbers of optimization steps, because these Q values diverge in the early learning stage, and plotting their Q values at later optimization steps would result in an excessive range on the vertical axis.
The additional results also show that only SCAS's OOD state correction term can achieve OOD action suppression and prevent value over-estimation. 

\begin{figure}[ht]
	\centering
	\includegraphics[width=0.65\textwidth]{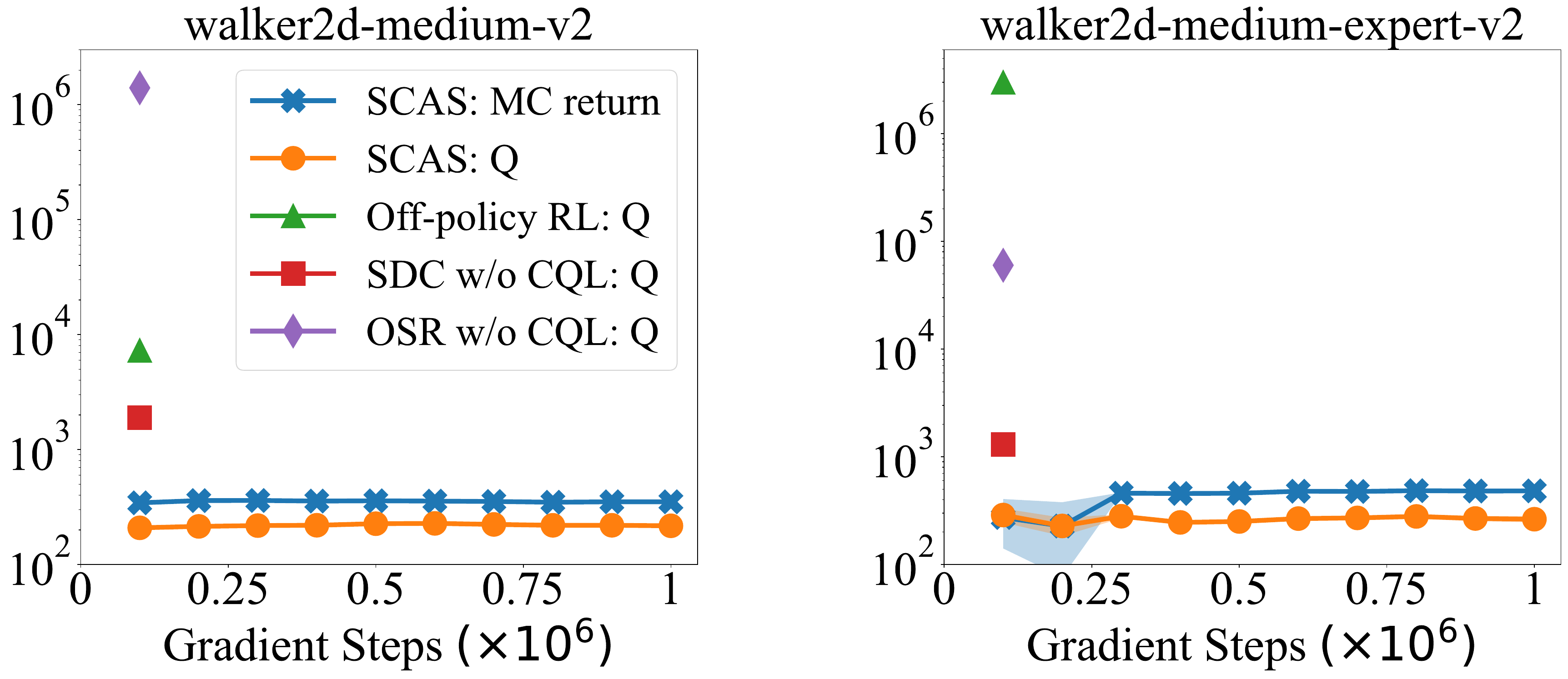}
\caption{
Oracle Q-values of \algo~(estimated by MC return) and learned Q-values of \algo and other algorithms across optimization steps. 
Here Off-policy RL is \algo with weight $\lambda = 0$ in Eq.~\eqref{eq:PI}. 
Only \algo's OOD state correction term can achieve OOD action suppression and prevent value over-estimation (divergence).
}
\label{appfig:Q value}
% \vspace{1mm}
\end{figure}

\subsection{Additional Results on OOD State Correction}
\label{appsec:Additional Results on OOD State Correction}
To further examine the OOD state correction effects of \algo, we conduct experiments on a modified D4RL maze2d-open-v0~\citep{fu2020d4rl}. It is a 2D point robot navigation task in a rectangle map with vertices $(0,0)$ and $(3,5)$. The agent needs to reach the goal at $(2,3)$. We modify the dataset by removing all the transitions containing states in a rectangle with vertices $(0,0)$ and $(1.5,2.5)$. During test, we let the initial state be randomly distributed in this OOD region. We train algorithms over $10^6$ gradient steps and average returns over 1000 evaluation trajectories. 

The results of BC~\citep{pomerleau1988alvinn}, TD3+BC~\citep{fujimoto2021minimalist}, CQL~\citep{kumar2020conservative}, MOPO~\citep{yu2020mopo}, IQL~\citep{kostrikov2022offline}, and \algo are reported in \cref{apptab:maze2d}.
With the OOD state correction signals, SCAS corrects the agent out of the OOD region more quickly and stably, achieving significantly better performance than typical offline RL methods.

\begin{table}[h]
\caption{Comparisons in modified maze2d-open-v0 over five random seeds.
}\label{apptab:maze2d}
  \centering
\setlength{\tabcolsep}{3.0pt}
\begin{tabular}{@{}l cccccc c@{}}
    \toprule
     & BC & TD3+BC & CQL & MOPO & IQL & SCAS \\ \midrule
    Steps out of OOD & 84.7$\pm$44.7 & 58.0$\pm$35.7 & 63.8$\pm$33.0 & 50.6$\pm$25.4 & 37.7$\pm$6.7 & \textbf{22.8$\pm$3.1} \\ 
    D4RL score & 38.5$\pm$25.4 & 63.9$\pm$39.3 & 41.2$\pm$42.0 & 110.1$\pm$78.8 & 335.0$\pm$114.9 & \textbf{571.9$\pm$2.7} \\ 
\bottomrule
\end{tabular}%
\end{table}

\subsection{Comparisons on the NeoRL Benchmark}
\label{appsec:NeoRL}

\begin{table}[h]
\caption{Averaged normalized scores on the NeoRL benchmark over four random seeds.
}\label{apptab:NeoRL}
  \centering
\setlength{\tabcolsep}{3.0pt}
\begin{tabular}{@{}l cccccc c@{}}
    \toprule
& BC & TD3BC & CQL & EDAC & MOPO & MOBILE & SCAS \\ \midrule
Hopper-High & 43.1 & 75.3 & 76.6 & 52.5 & 11.5 & 87.8 & \textbf{100.5$\pm$7.8} \\ 
Hopper-Med & 51.3 & 70.3 & 64.5 & 44.9 & 1.0 & 51.1 & \textbf{94.6$\pm$9.3} \\ 
Hopper-Low & 15.1 & 15.8 & 16.0 & 18.3 & 6.2 & 17.4 & \textbf{19.7$\pm$1.2} \\ 
Walker2d-High & 72.6 & 69.6 & \textbf{75.3} & \textbf{75.5} & 18.0 & \textbf{74.9} & \textbf{74.6$\pm$0.7} \\ 
Walker2d-Med & 48.7 & 58.5 & 57.3 & 57.6 & 39.9 & 62.2 & \textbf{63.4$\pm$1.0} \\ 
Walker2d-Low & 28.5 & 43.0 & \textbf{44.7} & 40.2 & 11.6 & 37.6 & 34.4$\pm$1.3 \\ 
HalfCheetah-High & 71.3 & 75.3 & 77.4 & 81.4 & 65.9 & \textbf{83.0} & 77.0$\pm$0.5 \\ 
HalfCheetah-Med & 49.0 & 52.3 & 54.6 & 54.9 & 62.3 & \textbf{77.8} & 53.1$\pm$0.1 \\ 
HalfCheetah-Low & 29.1 & 30.0 & 38.2 & 31.3 & 40.1 & \textbf{54.7} & 31.5$\pm$0.2 \\ \midrule
total & 408.7 & 490.1 & 504.6 & 456.6 & 256.5 & \textbf{546.5} & \textbf{548.7} \\ \midrule
hyperparameter tuning & \textbf{w/o} & w/ & w/ & w/ & w/ & w/ & \textbf{w/o} \\ 
\bottomrule
\end{tabular}%
\end{table}

We also evaluate \algo on the NeoRL benchmark~\citep{qin2022neorl}. NeoRL is a benchmark designed to simulate real-world scenarios by collecting datasets using a more conservative policy, aligning closely with realistic data collection scenarios. The narrow and limited data makes it challenging for offline RL algorithms. The results are shown in \cref{apptab:NeoRL}. The results of baselines are taken directly from the MOBILE paper~\citep{sun2023model}. According to Appendix C in \citep{sun2023model}, these results are obtained by tuning hyperparameters per dataset. For SCAS, we use the same fixed set of hyperparameters as specified in \cref{appsec:exp_details}. Without additional hyperparameter tuning, SCAS still performs comparably to MOBILE and outperforms other baselines in total scores.

\subsection{Comparisons with Additional Baselines}

\begin{table}[h]
  \caption{Comparisons with additional baselines on the D4RL benchmark. Here SCAS-ht means SCAS with slight hyperparameter tuning, selecting $\lambda$ from $\{0.025,0.25\}$. The results of SCAS-ht are averaged over 5 random seeds and the others are taken from their papers.}
  \label{apptab:SCASht}
  \small
  \centering
  \setlength{\tabcolsep}{5.0pt}
  \begin{tabular}{@{}l|cccccc|cc@{}}
    \toprule
    \multirow{2}{*}{Dataset}                               & \multicolumn{6}{c}{\begin{tabular}[c]{@{}c@{}}Ensemble-free\\\end{tabular}} & \multicolumn{2}{c}{\begin{tabular}[c]{@{}c@{}}Ensemble-based\\\end{tabular}} \\ 
\cmidrule(r){2-7}\cmidrule(r){8-9}
& DW+CQL & DW+IQL & SQL & EQL & DQL & SCAS-ht & EDAC & RORL \\ 
    \midrule
    halfcheetah-med & 46.5 & 47.7 & 48.3 & 47.2 & 51.1 & \textbf{58.5$\pm$1.1} & 65.9 & \textbf{66.8} \\ 
    hopper-med & 66.1 & 62.5 & 75.5 & 74.6 & 90.5 & \textbf{102.5$\pm$0.3} & 101.6 & \textbf{104.8} \\ 
    walker2d-med & 82.1 & 80.8 & 84.2 & 83.2 & 87 & \textbf{90.8$\pm$2.6} & 92.5 & \textbf{102.4} \\ 
    halfcheetah-med-rep & 45.1 & 44.6 & 44.8 & 44.5 & 47.8 & \textbf{52.9$\pm$1.4} & 61.3 & \textbf{61.9} \\ 
    hopper-med-rep & 88.6 & 79.7 & 99.7 & 98.1 & 101.3 & \textbf{101.6$\pm$1.0} & 101.0 & \textbf{102.8} \\ 
    walker2d-med-rep & 75.3 & 65.1 & 81.2 & 76.6 & \textbf{95.5} & 88.1$\pm$4.2 & 87.1 & \textbf{90.4} \\ 
    halfcheetah-med-exp & 86.1 & 93.7 & 94.0 & 90.6 & \textbf{96.8} & 91.7$\pm$2.7 & 106.3 & \textbf{107.8} \\ 
    hopper-med-exp & 92.9 & 81.0 & \textbf{111.8} & 105.5 & 111.1 & 109.7$\pm$3.5 & 110.7 & \textbf{112.7} \\ 
    walker2d-med-exp & 109.7 & 109.7 & 110.0 & 110.2 & 110.1 & \textbf{110.8$\pm$1.0} & 114.7 & \textbf{121.2} \\ 
    \midrule
    locomotion total & 692.4 & 664.8 & 749.5 & 730.5 & 791.2 & \textbf{806.6} & 841.1 & \textbf{870.8} \\ 
    \midrule
    antmaze-umaze & 72.7 & 81.3 & 92.2 & 93.2 & \textbf{93.4} & 90.4$\pm$3.6 & 0.0 & \textbf{96.7} \\ 
    antmaze-umaze-div & 34.0 & 61.0 & \textbf{74.0} & 65.4 & 66.2 & 66.4$\pm$14.3 & 0.0 & \textbf{90.7} \\ 
    antmaze-med-play & 4.0 & 78.7 & 80.2 & 77.5 & 76.6 & \textbf{83.6$\pm$3.1} & 0.0 & \textbf{76.3} \\ 
    antmaze-med-div & 1.3 & 64.7 & 79.1 & 70.0 & 78.6 & \textbf{84.6$\pm$5.0} & 0.0 & \textbf{69.3} \\ 
    antmaze-large-play & 2.0 & 40.0 & 53.2 & 45.6 & 46.4 & \textbf{59.4$\pm$5.0} & 0.0 & \textbf{16.3} \\ 
    antmaze-large-div & 0.0 & 42.0 & 52.3 & 42.5 & \textbf{56.6} & 56.2$\pm$5.4 & 0.0 & \textbf{41.0} \\ 
    \midrule
    antmaze total & 114.0 & 367.7 & 431.0 & 394.2 & 417.8 & \textbf{440.6} & 0.0 & \textbf{390.3} \\ 
    \bottomrule
  \end{tabular}
%   \vspace{-2mm}
\end{table}

The original SCAS requires only one single hyperparameter configuration in implementations. For a fair comparison with DW~\citep{hong2023beyond}, EDAC~\citep{an2021uncertainty}, RORL~\citep{yang2022rorl}, SQL~\citep{xu2023offline}, and EQL~\citep{xu2023offline}, we roughly select $\lambda$ from \{0.025, 0.25\} for each dataset, referring to this variant as SCAS-ht. The results of SCAS-ht and these methods are reported in \cref{apptab:SCASht}. Among the ensemble-free methods, SCAS-ht achieves the highest performance in both mujoco locomotion and antmaze domains. Compared with ensemble-based methods, SCAS-ht also performs better on antmaze tasks. 
DW~\citep{hong2023beyond} reweights ID data points by their values for behavior regularization and does not account for OOD states during the test phase. In contrast, our approach considers an OOD state correction scenario, resulting in enhanced robustness during the test phase and better performance.

\subsection{Results of Combining \algo Regularizer into Various Offline RL Objectives}
\label{appsec:combine}

\begin{table}[h]
  \caption{
  Comparisons on the D4RL benchmark.
  Here +SCAS means adding the SCAS regularizer. The results are averaged over 5 random seeds.
  }
  \label{apptab:combine}
  % \small
  \centering
  \setlength{\tabcolsep}{6.0pt}
  \begin{tabular}{@{}l |cc|cc|cc| c@{}}
    \toprule
    \multirow{2}{*}{Dataset}&\multirow{2}{*}{CQL}& CQL &\multirow{2}{*}{TD3BC} & TD3BC &\multirow{2}{*}{IQL} & IQL & \multirow{2}{*}{SCAS} \\
     &  & +SCAS & & +SCAS &  & +SCAS & \\ 
    \midrule
    halfcheetah-med & \textbf{47.0} & 46.5 & \textbf{48.3} & 44.1 & \textbf{47.4} & 46.8 & 46.6 \\ 
    hopper-med & 53.0 & \textbf{96.1} & 59.3 & \textbf{66.6} & 66.2 & \textbf{76.8} & \textbf{102.5} \\ 
    walker2d-med & 73.3 & \textbf{84.9} & \textbf{83.7} & 81.9 & 78.3 & \textbf{84.0} & \textbf{82.3} \\ 
    halfcheetah-med-rep & \textbf{45.5} & 43.6 & \textbf{44.6} & 40.5 & \textbf{44.2} & \textbf{44.2} & \textbf{44.0} \\ 
    hopper-med-rep & 88.7 & \textbf{100.2} & 60.9 & \textbf{79.4} & 94.7 & \textbf{102.3} & \textbf{101.6} \\ 
    walker2d-med-rep & \textbf{81.8} & 78.6 & \textbf{81.8} & 76.2 & 73.8 & \textbf{76.2} & 78.1 \\ 
    halfcheetah-med-exp & 75.6 & \textbf{92.9} & \textbf{90.7} & 89.6 & 86.7 & \textbf{92.7} & 91.7 \\ 
    hopper-med-exp & 105.6 & \textbf{108.2} & 98.0 & \textbf{108.9} & 91.5 & \textbf{101.9} & \textbf{109.7} \\ 
    walker2d-med-exp & \textbf{107.9} & 104.3 & \textbf{110.1} & 106.0 & \textbf{109.6} & 105.4 & 108.4 \\ 
    \midrule
    total & 678.4 & \textbf{755.5} & 677.4 & \textbf{693.2} & 692.4 & \textbf{730.3} & \textbf{764.9} \\ 
    \bottomrule
  \end{tabular}
%   \vspace{-2mm}
\end{table}

The SCAS regularizer is compatible with various offline RL objectives. We conduct experiments to combine SCAS with CQL~\citep{kumar2020conservative}, IQL~\citep{kostrikov2022offline}, and TD3BC~\citep{fujimoto2021minimalist}. Comparisons between these combined algorithms and the original ones are shown in \cref{apptab:combine}. We find that applying the SCAS regularizer leads to improved performance for these popular algorithms, which could be attributed to the OOD state correction effects of SCAS. However, we also find that these combined methods do not achieve better performance than the original SCAS (comparable on most tasks and worse on some tasks). We hypothesize that this is because SCAS already has the effect of OOD action suppression, and when combined with offline RL objectives that also aim for OOD action suppression, it may become overly conservative. As a result, the combined algorithms may perform worse than the original SCAS on some sub-optimal datasets.

\subsection{Additional Parameter Study Results}
\label{appsec:ablation}

In this section, we present additional parameter study results conducted on four challenging Antmaze tasks, including antmaze-large-play-v2, antmaze-large-diverse-v2, antmaze-medium-play-v2, and antmaze-medium-diverse-v2.

\textbf{Inverse Temperature $\alpha  $.}
The inverse temperature $\alpha  $ is the key hyperparameter in \algo for achieving value-aware OOD state correction. It controls the significance of the values of next states in \algo's OOD state correction. If $\alpha  =0$, the effect corresponds to vanilla OOD state correction. As $\alpha  $ gets larger, \algo is more inclined to correct the agent to the high-value ID states. Thus we can assess the effectiveness of value-aware OOD state correction compared to vanilla OOD state correction by varying $\alpha  $. Here we test \algo with different $\alpha  $ and the results are shown in \cref{appfig:temp}. We observe that a large $\alpha$ is crucial for achieving good performance on all the antmaze tasks, clearly demonstrating the effectiveness of our \textit{value-aware} OOD state correction. 
However, too large $\alpha  $~($\alpha  =10$) induces less satisfying performance, probably due to the increased variance of the learning objective. In general, we find that choosing $\alpha=5$ leads to the best performance.

\begin{figure}[h]
	\centering
	\includegraphics[width=\textwidth]{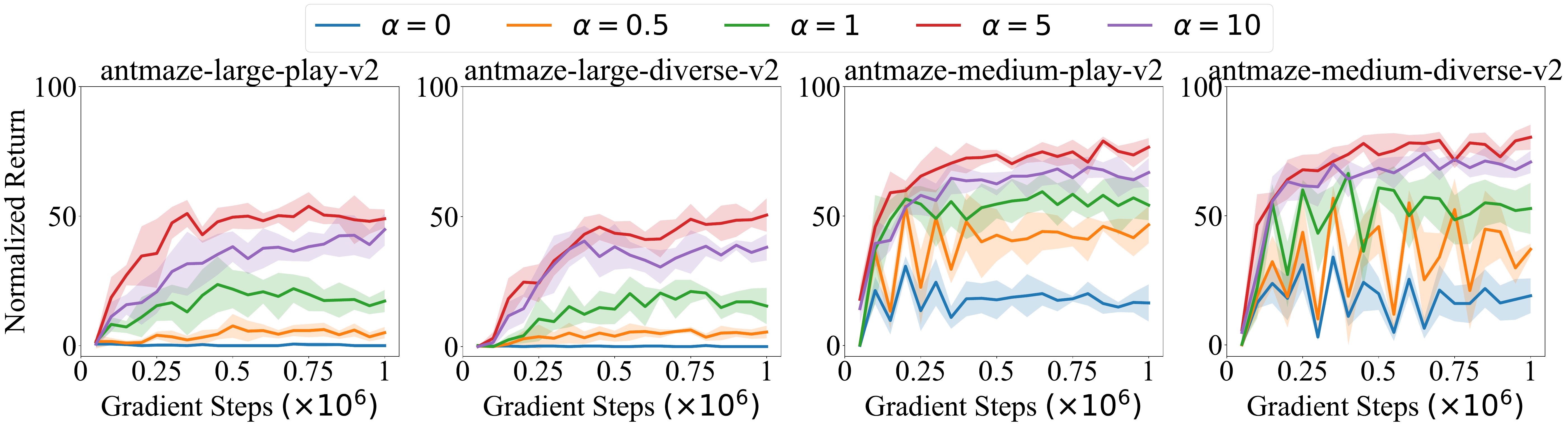}
% \vspace{-3mm}
\caption{
Additional results from the parameter study on the inverse temperature $\alpha$. The curves are averaged over 5 random seeds, with the shaded area representing the standard deviation across seeds.
}
\label{appfig:temp}
% \vspace{-1mm}
\end{figure}

\textbf{Balance Coefficient $\lambda$.}
The balance coefficient $\lambda$ controls the balance between vanilla policy improvement and \algo regularization. If we set $\lambda = 0$, \algo degenerates into the vanilla off-policy RL algorithm. Here we vary $\lambda$ in $\{0,0.25,0.5,0.75,1\}$ and present the corresponding learning curves of \algo in \cref{appfig:lam}. Notably, \algo is able to converge to good performance over a very wide range of $\lambda$. However, if $\lambda=0$, the vanilla off-policy RL suffers from extrapolation error and overestimation, demonstrating poor performance. We also observe a very interesting fact that even when $\lambda=1$ and the signal from RL improvement~(max Q) is removed, \algo still performs well on most tasks. This could be attributed to the fact that value-aware OOD state correction implies some sort of improvement in policy by maximizing the values of policy-induced next states.

\begin{figure}[h]
	\centering
	\includegraphics[width=\textwidth]{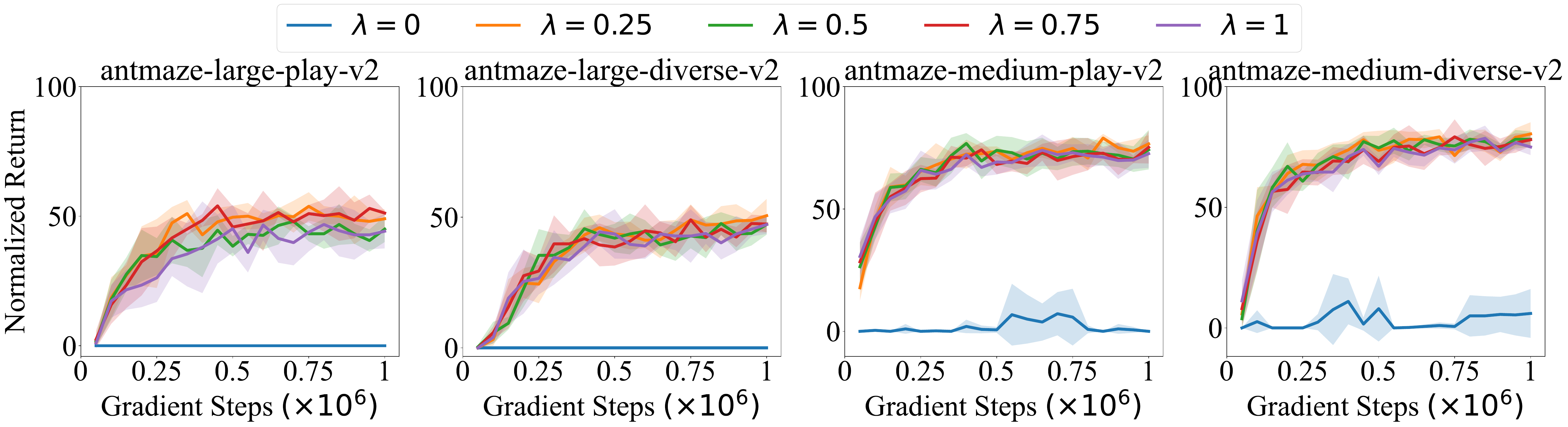}
% \vspace{-3mm}
\caption{
Additional results from the parameter study on the balance coefficient $\lambda$. The curves are averaged over 5 random seeds, with the shaded area representing the standard deviation across seeds.
}
\label{appfig:lam}
% \vspace{-1mm}
\end{figure}

\textbf{Noise Scale $\sigma$.}
The noise scale $\sigma$ is the standard deviation of the Gaussian noise added to the original states for formulating the \algo regularizer. Here we test \algo with different $\sigma$ and present the corresponding learning curves in \cref{appfig:beta}. We observe a significant performance drop with too large $\sigma$~($\sigma=1$) on all the tasks, due to the heavily corrupted learning signal. 
On the other hand, when $\sigma=0$~(without noise), the performance is also less satisfying. With $\sigma=0$, \algo is still able to prevent the agent at ID states from entering OOD states, maintaining the agent in safe regions, but it cannot correct the agent from OOD states to ID states as reliably as the original \algo.
In general, we find that choosing $\sigma = 0.001$ or $0.01$ leads to the best performance.

\begin{figure}[h]
	\centering
	\includegraphics[width=\textwidth]{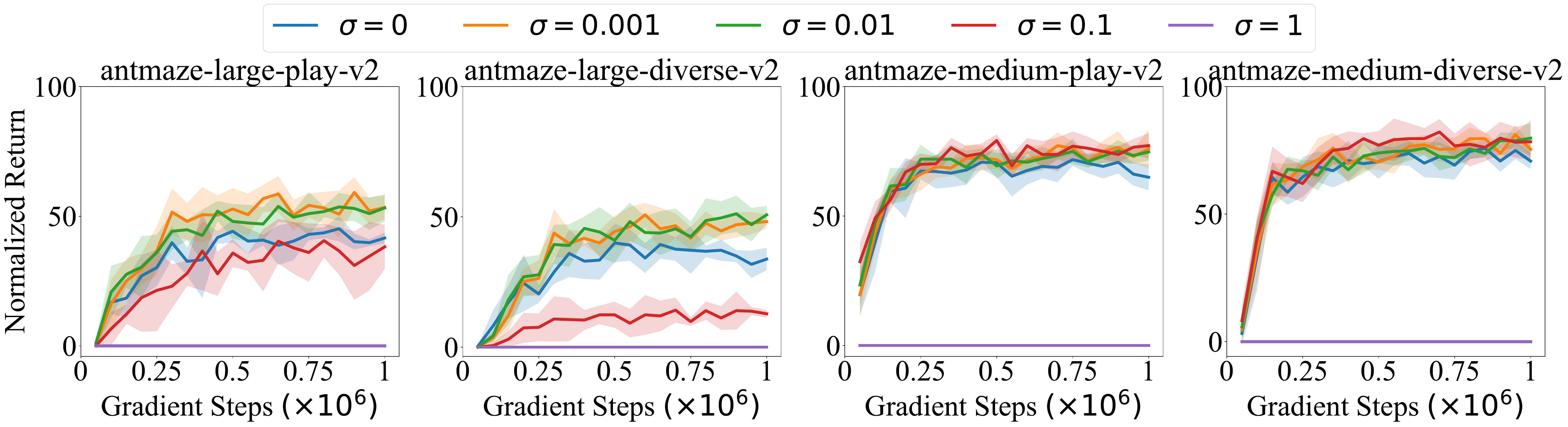}
% \vspace{-3mm}
\caption{
Additional results from the parameter study on the noise scale $\sigma$. 
The curves are averaged over 5 random seeds, with the shaded area representing the standard deviation across seeds.
}
\label{appfig:beta}
% \vspace{-1mm}
\end{figure}

\subsection{Sensitivity Analysis on Dynamics Model Errors}
\label{appsec:checkpoint}

\begin{figure}[h]
    \centering
    \includegraphics[width=0.65\textwidth]{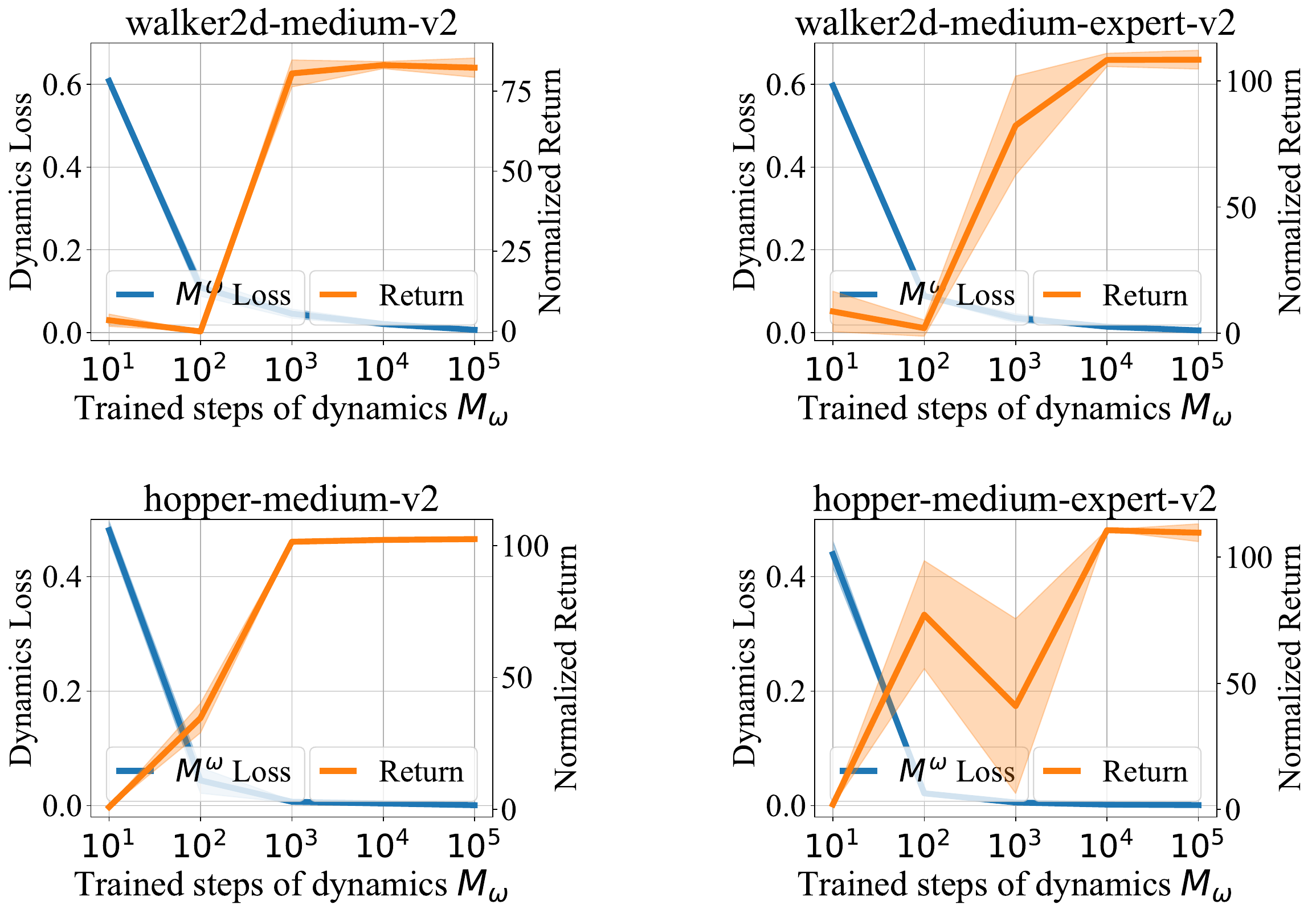}
    % \vspace{-6mm}
    \caption{
    Performance of SCAS under different dynamics model checkpoints, which are obtained at different steps in the dynamics model training process. The figure plots the training loss of the dynamics model $M_\omega$ and the corresponding normalized return of SCAS  over 5 random seeds.
    }
    \label{appfig:model error}
    % \vspace{-2mm}
\end{figure}

To empirically investigate SCAS under different dynamics model errors, we run SCAS using different checkpoints of the trained dynamics model, which are obtained at different steps in the dynamics model training process. The model error is controlled by the number of trained steps. The results are shown in \cref{appfig:model error}. The figure plots the training loss of the dynamics model $M_\omega$ and the corresponding normalized return of SCAS over 5 random seeds. We observe that the performance of SCAS increases with the number of trained steps of the dynamics model (i.e. the accuracy of the model) and stabilizes at a high level.

\subsection{Learning Curves of \algo}
\label{appsec:learning curves}
Learning curves on Gym locomotion tasks and Antmaze tasks are presented in \cref{fig:mujoco_app} and \cref{fig:ant_app} respectively. The curves are averaged over 5 random seeds, with the shaded area representing the standard deviation across seeds.

\section{Broader Impact}
\label{app_sec:Broader Impact}
Offline RL holds promise for facilitating practical RL applications in domains like robotics, healthcare, and education, where data collection is often costly or risky. However, it is important to recognize its potential negative societal impacts. One concern is that biases in offline data may transfer to the learned policy. In addition, offline RL may affect employment by automating tasks traditionally performed by humans, like factory automation or autonomous driving. Addressing these challenges will contribute to the responsible development and deployment of offline RL algorithms.

From an academic standpoint, this research systematically analyze the OOD state issue in offline RL and propose \algo, a simple yet effective approach that unifies OOD state correction and OOD action suppression. This work potentially offers researchers a new perspective on analyzing the OOD state issue and enhancing test-time robustness in offline RL. Besides, \algo also holds the promise to be extended to safe RL~\citep{achiam2017constrained,gu2022review,garcia2015comprehensive}, meta RL~\citep{finn2017model,wang2022model,wang2024simple,wang2024robust,beck2023survey}, and multi-agent RL~\citep{lowe2017multi,rashid2020monotonic,shao2023complementary,qu2024choices,gronauer2022multi}.

\begin{figure}[ht]
\vspace{10mm}
	\centering
	\includegraphics[width=\linewidth]{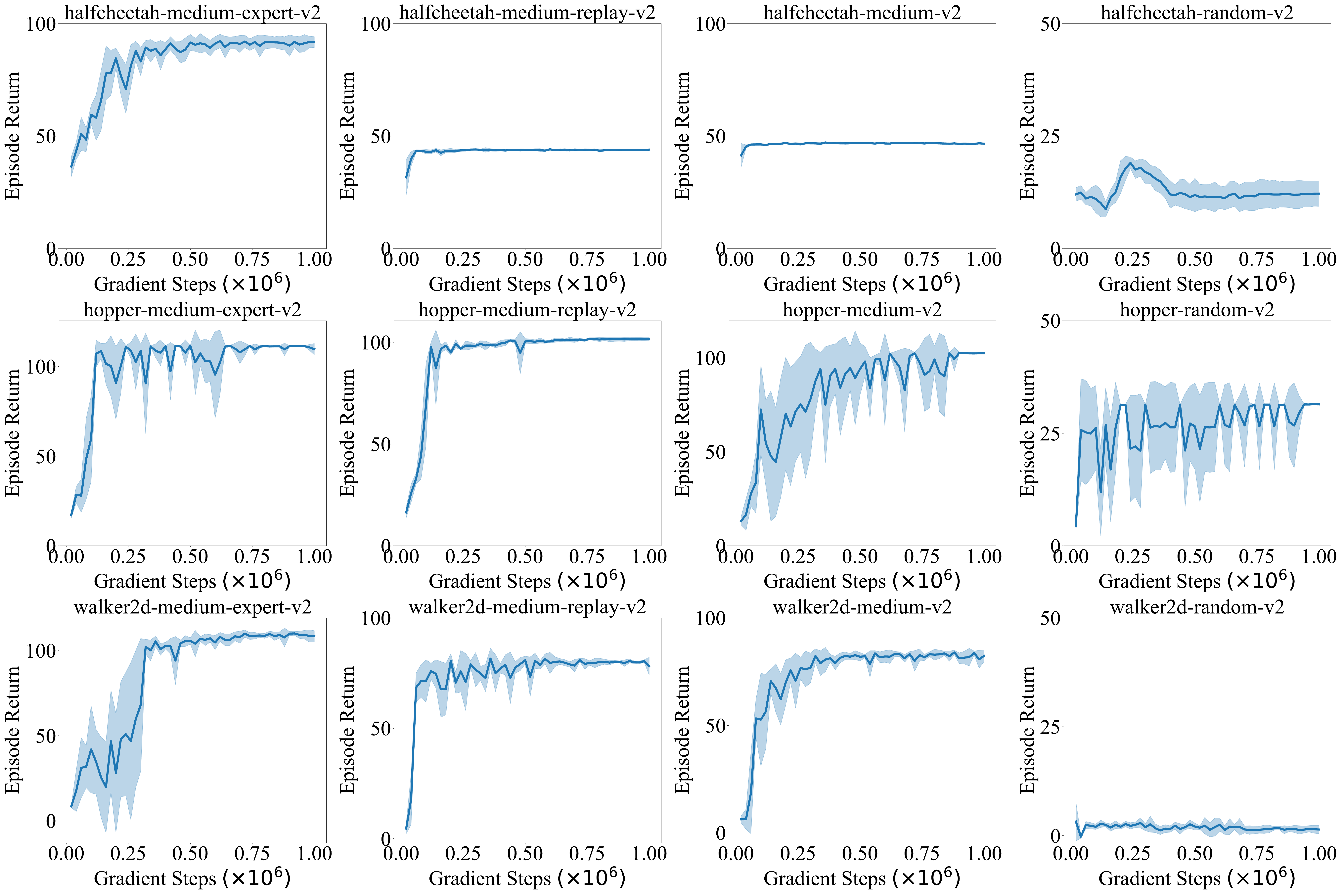}
	\caption{Learning curves of \algo on Gym locomotion tasks. The curves are averaged over 5 random seeds, with the shaded area representing the standard deviation across seeds.
	} 
	\label{fig:mujoco_app}
\end{figure}
\begin{figure}[ht]
	\centering
	\includegraphics[width=0.92\linewidth]{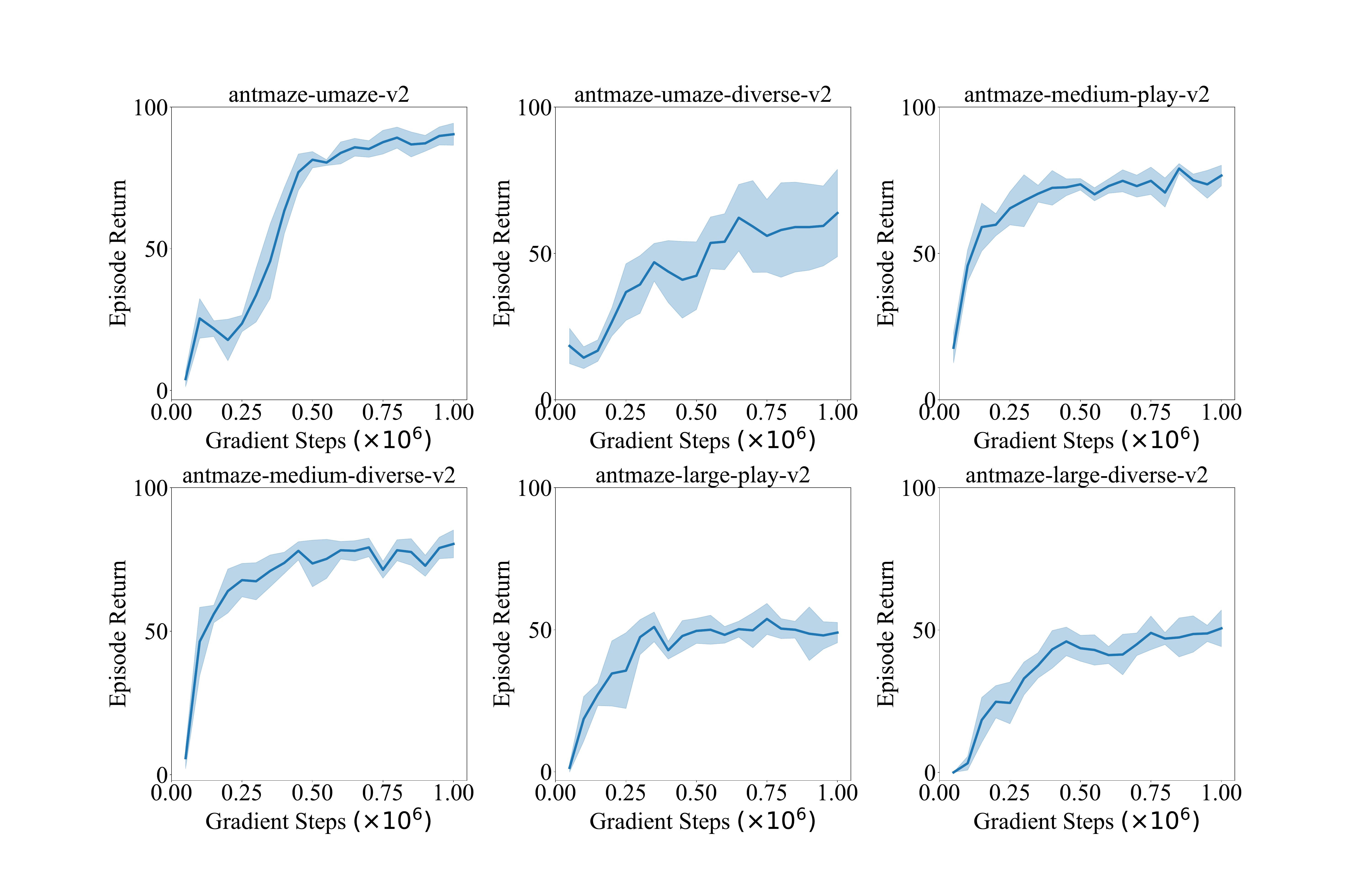}
	\caption{Learning curves of \algo on AntMaze tasks. The curves are averaged over 5 random seeds, with the shaded area representing the standard deviation across seeds.
	}
	\label{fig:ant_app}
\end{figure}

\clearpage

%%%%%%%%%%%%%%%%%%%%%%%%%%%%%%%%%%%%%%%%%%%%%%%%%%%%%%%%%%%%

\newpage
\section*{NeurIPS Paper Checklist}

\begin{enumerate}

\item {\bf Claims}
    \item[] Question: Do the main claims made in the abstract and introduction accurately reflect the paper's contributions and scope?
    \item[] Answer: \answerYes{} % Replace by \answerYes{}, \answerNo{}, or \answerNA{}.
    \item[] Justification: The main claims made in the abstract and introduction accurately reflect the paper's contributions and scope.
    \item[] Guidelines:
    \begin{itemize}
        \item The answer NA means that the abstract and introduction do not include the claims made in the paper.
        \item The abstract and/or introduction should clearly state the claims made, including the contributions made in the paper and important assumptions and limitations. A No or NA answer to this question will not be perceived well by the reviewers. 
        \item The claims made should match theoretical and experimental results, and reflect how much the results can be expected to generalize to other settings. 
        \item It is fine to include aspirational goals as motivation as long as it is clear that these goals are not attained by the paper. 
    \end{itemize}

\item {\bf Limitations}
    \item[] Question: Does the paper discuss the limitations of the work performed by the authors?
    \item[] Answer: \answerYes{} % Replace by \answerYes{}, \answerNo{}, or \answerNA{}.
    \item[] Justification: Please refer to \cref{sec:Conclusion and Limitations}.
    \item[] Guidelines:
    \begin{itemize}
        \item The answer NA means that the paper has no limitation while the answer No means that the paper has limitations, but those are not discussed in the paper. 
        \item The authors are encouraged to create a separate "Limitations" section in their paper.
        \item The paper should point out any strong assumptions and how robust the results are to violations of these assumptions (e.g., independence assumptions, noiseless settings, model well-specification, asymptotic approximations only holding locally). The authors should reflect on how these assumptions might be violated in practice and what the implications would be.
        \item The authors should reflect on the scope of the claims made, e.g., if the approach was only tested on a few datasets or with a few runs. In general, empirical results often depend on implicit assumptions, which should be articulated.
        \item The authors should reflect on the factors that influence the performance of the approach. For example, a facial recognition algorithm may perform poorly when image resolution is low or images are taken in low lighting. Or a speech-to-text system might not be used reliably to provide closed captions for online lectures because it fails to handle technical jargon.
        \item The authors should discuss the computational efficiency of the proposed algorithms and how they scale with dataset size.
        \item If applicable, the authors should discuss possible limitations of their approach to address problems of privacy and fairness.
        \item While the authors might fear that complete honesty about limitations might be used by reviewers as grounds for rejection, a worse outcome might be that reviewers discover limitations that aren't acknowledged in the paper. The authors should use their best judgment and recognize that individual actions in favor of transparency play an important role in developing norms that preserve the integrity of the community. Reviewers will be specifically instructed to not penalize honesty concerning limitations.
    \end{itemize}

\item {\bf Theory Assumptions and Proofs}
    \item[] Question: For each theoretical result, does the paper provide the full set of assumptions and a complete (and correct) proof?
    \item[] Answer: \answerYes{} % Replace by \answerYes{}, \answerNo{}, or \answerNA{}.
    \item[] Justification: Please refer to \cref{appsec:proofs}.
    \item[] Guidelines:
    \begin{itemize}
        \item The answer NA means that the paper does not include theoretical results. 
        \item All the theorems, formulas, and proofs in the paper should be numbered and cross-referenced.
        \item All assumptions should be clearly stated or referenced in the statement of any theorems.
        \item The proofs can either appear in the main paper or the supplemental material, but if they appear in the supplemental material, the authors are encouraged to provide a short proof sketch to provide intuition. 
        \item Inversely, any informal proof provided in the core of the paper should be complemented by formal proofs provided in appendix or supplemental material.
        \item Theorems and Lemmas that the proof relies upon should be properly referenced. 
    \end{itemize}

    \item {\bf Experimental Result Reproducibility}
    \item[] Question: Does the paper fully disclose all the information needed to reproduce the main experimental results of the paper to the extent that it affects the main claims and/or conclusions of the paper (regardless of whether the code and data are provided or not)?
    \item[] Answer: \answerYes{} % Replace by \answerYes{}, \answerNo{}, or \answerNA{}.
    \item[] Justification: Please refer to \cref{appsec:exp_details}.
    \item[] Guidelines:
    \begin{itemize}
        \item The answer NA means that the paper does not include experiments.
        \item If the paper includes experiments, a No answer to this question will not be perceived well by the reviewers: Making the paper reproducible is important, regardless of whether the code and data are provided or not.
        \item If the contribution is a dataset and/or model, the authors should describe the steps taken to make their results reproducible or verifiable. 
        \item Depending on the contribution, reproducibility can be accomplished in various ways. For example, if the contribution is a novel architecture, describing the architecture fully might suffice, or if the contribution is a specific model and empirical evaluation, it may be necessary to either make it possible for others to replicate the model with the same dataset, or provide access to the model. In general. releasing code and data is often one good way to accomplish this, but reproducibility can also be provided via detailed instructions for how to replicate the results, access to a hosted model (e.g., in the case of a large language model), releasing of a model checkpoint, or other means that are appropriate to the research performed.
        \item While NeurIPS does not require releasing code, the conference does require all submissions to provide some reasonable avenue for reproducibility, which may depend on the nature of the contribution. For example
        \begin{enumerate}
            \item If the contribution is primarily a new algorithm, the paper should make it clear how to reproduce that algorithm.
            \item If the contribution is primarily a new model architecture, the paper should describe the architecture clearly and fully.
            \item If the contribution is a new model (e.g., a large language model), then there should either be a way to access this model for reproducing the results or a way to reproduce the model (e.g., with an open-source dataset or instructions for how to construct the dataset).
            \item We recognize that reproducibility may be tricky in some cases, in which case authors are welcome to describe the particular way they provide for reproducibility. In the case of closed-source models, it may be that access to the model is limited in some way (e.g., to registered users), but it should be possible for other researchers to have some path to reproducing or verifying the results.
        \end{enumerate}
    \end{itemize}

\item {\bf Open access to data and code}
    \item[] Question: Does the paper provide open access to the data and code, with sufficient instructions to faithfully reproduce the main experimental results, as described in supplemental material?
    \item[] Answer: \answerYes{} % Replace by \answerYes{}, \answerNo{}, or \answerNA{}.
    \item[] Justification: Please refer to the code in the supplemental material.
    \item[] Guidelines:
    \begin{itemize}
        \item The answer NA means that paper does not include experiments requiring code.
        \item Please see the NeurIPS code and data submission guidelines (\url{https://nips.cc/public/guides/CodeSubmissionPolicy}) for more details.
        \item While we encourage the release of code and data, we understand that this might not be possible, so “No” is an acceptable answer. Papers cannot be rejected simply for not including code, unless this is central to the contribution (e.g., for a new open-source benchmark).
        \item The instructions should contain the exact command and environment needed to run to reproduce the results. See the NeurIPS code and data submission guidelines (\url{https://nips.cc/public/guides/CodeSubmissionPolicy}) for more details.
        \item The authors should provide instructions on data access and preparation, including how to access the raw data, preprocessed data, intermediate data, and generated data, etc.
        \item The authors should provide scripts to reproduce all experimental results for the new proposed method and baselines. If only a subset of experiments are reproducible, they should state which ones are omitted from the script and why.
        \item At submission time, to preserve anonymity, the authors should release anonymized versions (if applicable).
        \item Providing as much information as possible in supplemental material (appended to the paper) is recommended, but including URLs to data and code is permitted.
    \end{itemize}

\item {\bf Experimental Setting/Details}
    \item[] Question: Does the paper specify all the training and test details (e.g., data splits, hyperparameters, how they were chosen, type of optimizer, etc.) necessary to understand the results?
    \item[] Answer: \answerYes{} % Replace by \answerYes{}, \answerNo{}, or \answerNA{}.
    \item[] Justification: Please refer to \cref{appsec:exp_details}.
    \item[] Guidelines:
    \begin{itemize}
        \item The answer NA means that the paper does not include experiments.
        \item The experimental setting should be presented in the core of the paper to a level of detail that is necessary to appreciate the results and make sense of them.
        \item The full details can be provided either with the code, in appendix, or as supplemental material.
    \end{itemize}

\item {\bf Experiment Statistical Significance}
    \item[] Question: Does the paper report error bars suitably and correctly defined or other appropriate information about the statistical significance of the experiments?
    \item[] Answer: \answerYes{} % Replace by \answerYes{}, \answerNo{}, or \answerNA{}.
    \item[] Justification: The results in the paper are accompanied by standard deviations across multiple seeds.
    \item[] Guidelines:
    \begin{itemize}
        \item The answer NA means that the paper does not include experiments.
        \item The authors should answer "Yes" if the results are accompanied by error bars, confidence intervals, or statistical significance tests, at least for the experiments that support the main claims of the paper.
        \item The factors of variability that the error bars are capturing should be clearly stated (for example, train/test split, initialization, random drawing of some parameter, or overall run with given experimental conditions).
        \item The method for calculating the error bars should be explained (closed form formula, call to a library function, bootstrap, etc.)
        \item The assumptions made should be given (e.g., Normally distributed errors).
        \item It should be clear whether the error bar is the standard deviation or the standard error of the mean.
        \item It is OK to report 1-sigma error bars, but one should state it. The authors should preferably report a 2-sigma error bar than state that they have a 96\% CI, if the hypothesis of Normality of errors is not verified.
        \item For asymmetric distributions, the authors should be careful not to show in tables or figures symmetric error bars that would yield results that are out of range (e.g. negative error rates).
        \item If error bars are reported in tables or plots, The authors should explain in the text how they were calculated and reference the corresponding figures or tables in the text.
    \end{itemize}

\item {\bf Experiments Compute Resources}
    \item[] Question: For each experiment, does the paper provide sufficient information on the computer resources (type of compute workers, memory, time of execution) needed to reproduce the experiments?
    \item[] Answer: \answerYes{} % Replace by \answerYes{}, \answerNo{}, or \answerNA{}.
    \item[] Justification: Please refer to \cref{appsec:exp_details}.
    \item[] Guidelines:
    \begin{itemize}
        \item The answer NA means that the paper does not include experiments.
        \item The paper should indicate the type of compute workers CPU or GPU, internal cluster, or cloud provider, including relevant memory and storage.
        \item The paper should provide the amount of compute required for each of the individual experimental runs as well as estimate the total compute. 
        \item The paper should disclose whether the full research project required more compute than the experiments reported in the paper (e.g., preliminary or failed experiments that didn't make it into the paper). 
    \end{itemize}
    
\item {\bf Code Of Ethics}
    \item[] Question: Does the research conducted in the paper conform, in every respect, with the NeurIPS Code of Ethics \url{https://neurips.cc/public/EthicsGuidelines}?
    \item[] Answer: \answerYes{} % Replace by \answerYes{}, \answerNo{}, or \answerNA{}.
    \item[] Justification: The research conducted in the paper conforms, in every respect, with the NeurIPS Code of Ethics.
    \item[] Guidelines:
    \begin{itemize}
        \item The answer NA means that the authors have not reviewed the NeurIPS Code of Ethics.
        \item If the authors answer No, they should explain the special circumstances that require a deviation from the Code of Ethics.
        \item The authors should make sure to preserve anonymity (e.g., if there is a special consideration due to laws or regulations in their jurisdiction).
    \end{itemize}

\item {\bf Broader Impacts}
    \item[] Question: Does the paper discuss both potential positive societal impacts and negative societal impacts of the work performed?
    \item[] Answer: \answerYes{} % Replace by \answerYes{}, \answerNo{}, or \answerNA{}.
    \item[] Justification: Please refer to \cref{app_sec:Broader Impact}.
    \item[] Guidelines:
    \begin{itemize}
        \item The answer NA means that there is no societal impact of the work performed.
        \item If the authors answer NA or No, they should explain why their work has no societal impact or why the paper does not address societal impact.
        \item Examples of negative societal impacts include potential malicious or unintended uses (e.g., disinformation, generating fake profiles, surveillance), fairness considerations (e.g., deployment of technologies that could make decisions that unfairly impact specific groups), privacy considerations, and security considerations.
        \item The conference expects that many papers will be foundational research and not tied to particular applications, let alone deployments. However, if there is a direct path to any negative applications, the authors should point it out. For example, it is legitimate to point out that an improvement in the quality of generative models could be used to generate deepfakes for disinformation. On the other hand, it is not needed to point out that a generic algorithm for optimizing neural networks could enable people to train models that generate Deepfakes faster.
        \item The authors should consider possible harms that could arise when the technology is being used as intended and functioning correctly, harms that could arise when the technology is being used as intended but gives incorrect results, and harms following from (intentional or unintentional) misuse of the technology.
        \item If there are negative societal impacts, the authors could also discuss possible mitigation strategies (e.g., gated release of models, providing defenses in addition to attacks, mechanisms for monitoring misuse, mechanisms to monitor how a system learns from feedback over time, improving the efficiency and accessibility of ML).
    \end{itemize}
    
\item {\bf Safeguards}
    \item[] Question: Does the paper describe safeguards that have been put in place for responsible release of data or models that have a high risk for misuse (e.g., pretrained language models, image generators, or scraped datasets)?
    \item[] Answer: \answerNA{} % Replace by \answerYes{}, \answerNo{}, or \answerNA{}.
    \item[] Justification: The paper poses no such risks.
    \item[] Guidelines:
    \begin{itemize}
        \item The answer NA means that the paper poses no such risks.
        \item Released models that have a high risk for misuse or dual-use should be released with necessary safeguards to allow for controlled use of the model, for example by requiring that users adhere to usage guidelines or restrictions to access the model or implementing safety filters. 
        \item Datasets that have been scraped from the Internet could pose safety risks. The authors should describe how they avoided releasing unsafe images.
        \item We recognize that providing effective safeguards is challenging, and many papers do not require this, but we encourage authors to take this into account and make a best faith effort.
    \end{itemize}

\item {\bf Licenses for existing assets}
    \item[] Question: Are the creators or original owners of assets (e.g., code, data, models), used in the paper, properly credited and are the license and terms of use explicitly mentioned and properly respected?
    \item[] Answer: \answerYes{} % Replace by \answerYes{}, \answerNo{}, or \answerNA{}.
    \item[] Justification: The creators or original owners of assets used in the paper are properly credited and the license and terms of use are explicitly mentioned and properly respected.
    \item[] Guidelines:
    \begin{itemize}
        \item The answer NA means that the paper does not use existing assets.
        \item The authors should cite the original paper that produced the code package or dataset.
        \item The authors should state which version of the asset is used and, if possible, include a URL.
        \item The name of the license (e.g., CC-BY 4.0) should be included for each asset.
        \item For scraped data from a particular source (e.g., website), the copyright and terms of service of that source should be provided.
        \item If assets are released, the license, copyright information, and terms of use in the package should be provided. For popular datasets, \url{paperswithcode.com/datasets} has curated licenses for some datasets. Their licensing guide can help determine the license of a dataset.
        \item For existing datasets that are re-packaged, both the original license and the license of the derived asset (if it has changed) should be provided.
        \item If this information is not available online, the authors are encouraged to reach out to the asset's creators.
    \end{itemize}

\item {\bf New Assets}
    \item[] Question: Are new assets introduced in the paper well documented and is the documentation provided alongside the assets?
    \item[] Answer: \answerYes{} % Replace by \answerYes{}, \answerNo{}, or \answerNA{}.
    \item[] Justification: The code is well documented and anonymized.
    \item[] Guidelines:
    \begin{itemize}
        \item The answer NA means that the paper does not release new assets.
        \item Researchers should communicate the details of the dataset/code/model as part of their submissions via structured templates. This includes details about training, license, limitations, etc. 
        \item The paper should discuss whether and how consent was obtained from people whose asset is used.
        \item At submission time, remember to anonymize your assets (if applicable). You can either create an anonymized URL or include an anonymized zip file.
    \end{itemize}

\item {\bf Crowdsourcing and Research with Human Subjects}
    \item[] Question: For crowdsourcing experiments and research with human subjects, does the paper include the full text of instructions given to participants and screenshots, if applicable, as well as details about compensation (if any)? 
    \item[] Answer: \answerNA{} % Replace by \answerYes{}, \answerNo{}, or \answerNA{}.
    \item[] Justification: The paper does not involve crowdsourcing nor research with human subjects.
    \item[] Guidelines:
    \begin{itemize}
        \item The answer NA means that the paper does not involve crowdsourcing nor research with human subjects.
        \item Including this information in the supplemental material is fine, but if the main contribution of the paper involves human subjects, then as much detail as possible should be included in the main paper. 
        \item According to the NeurIPS Code of Ethics, workers involved in data collection, curation, or other labor should be paid at least the minimum wage in the country of the data collector. 
    \end{itemize}

\item {\bf Institutional Review Board (IRB) Approvals or Equivalent for Research with Human Subjects}
    \item[] Question: Does the paper describe potential risks incurred by study participants, whether such risks were disclosed to the subjects, and whether Institutional Review Board (IRB) approvals (or an equivalent approval/review based on the requirements of your country or institution) were obtained?
    \item[] Answer: \answerNA{} % Replace by \answerYes{}, \answerNo{}, or \answerNA{}.
    \item[] Justification: The paper does not involve crowdsourcing nor research with human subjects.
    \item[] Guidelines:
    \begin{itemize}
        \item The answer NA means that the paper does not involve crowdsourcing nor research with human subjects.
        \item Depending on the country in which research is conducted, IRB approval (or equivalent) may be required for any human subjects research. If you obtained IRB approval, you should clearly state this in the paper. 
        \item We recognize that the procedures for this may vary significantly between institutions and locations, and we expect authors to adhere to the NeurIPS Code of Ethics and the guidelines for their institution. 
        \item For initial submissions, do not include any information that would break anonymity (if applicable), such as the institution conducting the review.
    \end{itemize}

\end{enumerate}

\end{document}